\documentclass[10pt, conference, letterpaper]{IEEEtran}

\IEEEoverridecommandlockouts
% The preceding line is only needed to identify funding in the first footnote. If that is unneeded, please comment it out.
\usepackage[T1]{fontenc}
\usepackage{cite}
\usepackage{amsmath,amssymb,amsfonts}
\usepackage{graphicx}
\usepackage{textcomp}
\usepackage{xcolor}
\def\BibTeX{{\rm B\kern-.05em{\sc i\kern-.025em b}\kern-.08em
    T\kern-.1667em\lower.7ex\hbox{E}\kern-.125emX}}

% Recommended, but optional, packages for figures and better typesetting:
\usepackage{microtype}
\usepackage{graphicx}
\usepackage{algorithm}
\usepackage{algpseudocode}
\usepackage{booktabs} % for professional tables
\usepackage{amsthm}
\usepackage[draft]{hyperref}
\usepackage[capitalise]{cleveref}
\usepackage{footnote}

\usepackage{bm}
\usepackage{bbm}
\usepackage{graphicx}
\usepackage{thmtools}
\usepackage{thm-restate}
\usepackage{mathtools}
\usepackage[skip=0pt]{caption}
\usepackage[skip=0pt]{subcaption}
\usepackage{booktabs} % toprule

\usepackage{threeparttable, array, float} % for adding note at the end of a table
\usepackage{accents}

%colors
\definecolor{red}{HTML}{E51400}  %red
\definecolor{blue}{HTML}{0050EF} %cobalt
\definecolor{green}{HTML}{008A00} %emerald
\definecolor{purple}{HTML}{AA00FF} %violet
\definecolor{dark-red}{rgb}{0.4, 0.15, 0.15}
\definecolor{dark-blue}{rgb}{0.15, 0.15, 0.4}
\definecolor{medium-red}{rgb}{0.5, 0, 0}
\definecolor{medium-blue}{rgb}{0, 0, 0.5}
\definecolor{light-red}{rgb}{0.7, 0, 0}
\definecolor{light-blue}{rgb}{0, 0, 0.7}

\hypersetup{
    colorlinks=true,
    linkcolor=blue,
    filecolor=magenta,      
    urlcolor=blue,
}

% theorems
% \theoremstyle{plain}
% \newtheorem{theorem}{Theorem}[section]
% \newtheorem{proposition}[theorem]{Proposition}
% \newtheorem{lemma}[theorem]{Lemma}
% \newtheorem{corollary}[theorem]{Corollary}
% \newtheorem{condition}[theorem]{Condition}
% \theoremstyle{definition}
% \newtheorem{definition}[theorem]{Definition}
% \newtheorem{assumption}[theorem]{Assumption}
% \newtheorem{remark}[theorem]{Remark}
% \newtheorem{fact}{Fact}
% % \newtheorem{condition}{\bf Condition}
% \theoremstyle{remark}

\newtheorem{theorem}{\bf Theorem}
\newtheorem{lemma}{\bf Lemma}

\renewcommand{\thefootnote}{\fnsymbol{footnote}}
\theoremstyle{definition}
\newtheorem{remark}{\bf Remark}

% color
\definecolor{red}{HTML}{E51400} %red
\definecolor{blue}{HTML}{0050EF} %cobalt
\definecolor{green}{HTML}{008A00} %emerald
\definecolor{purple}{HTML}{AA00FF} %violet
\definecolor{orange}{HTML}{FF7F00}
\definecolor{gray}{HTML}{848482}
\definecolor{Gray}{gray}{0.85}
\definecolor{LightGray}{gray}{0.96}

%%%%%%%%%%%%%%%%%%%%%%%%%%%%%%%%%%%%%%%%%%
%%%%% for Writing and Comments
%%%%%%%%%%%%%%%%%%%%%%%%%%%%%%%%%%%%%%%%%%
\usepackage{xspace}
\usepackage{comment}

\usepackage{fontawesome5}

\usepackage[colorinlistoftodos,prependcaption,textsize=tiny,textwidth=2cm,color=orange]{todonotes}
% \reversemarginpar

% \newcommand{\rev}[1]{{\textcolor{red}{#1}}}
% \newcommand{\rev}[1]{{}}

\newcommand{\carlee}[1]{{\textcolor{green}{CJW: #1}}}

% names

\newcommand{\AlgOne}{\texttt{CUCB-SC}}
\newcommand{\AlgTwo}{\texttt{CLCB-SC-LS}}

% math
\DeclareMathOperator*{\argmin}{argmin}
\DeclareMathOperator*{\argmax}{argmax}

\newcommand{\ubar}[1]{\underaccent{\bar}{#1}}

\newcommand{\norm}[1]{\left\lVert#1\right\rVert}

\newcommand{\abs}[1]{\left| #1 \right|}

\newcommand{\E}{\mathbb{E}}

\newcommand{\I}{\mathbb{I}}

\newcommand{\bc}{\boldsymbol{c}}

\newcommand{\bp}{\boldsymbol{p}}

\newcommand{\cA}{\mathcal{A}}
\newcommand{\cB}{\mathcal{B}}

\newcommand{\cD}{\mathcal{D}}
\newcommand{\cE}{\mathcal{E}}

\newcommand{\cG}{\mathcal{G}}

\newcommand{\cM}{\mathcal{M}}

\newcommand{\cU}{\mathcal{U}}
\newcommand{\cV}{\mathcal{V}}

\newcommand{\cT}{{\mathcal{T}}}
\newcommand{\cQ}{{\mathcal{Q}}}

\newcommand{\cX}{{\mathcal{X}}}

\newcommand{\defeq}{\vcentcolon=}

\makeatletter
\newcommand*{\rom}[1]{\expandafter\@slowromancap\romannumeral #1@}
\makeatother

%values

% topic sentences
% \newcommand{\ts}[1]{{\color{blue} [#1]}}
\newcommand{\ts}[1]{}

% Macro for switching between full and short version
\newcommand{\compilefullversion}{true}%SHOW full version
\ifthenelse{\equal{\compilefullversion}{false}}{%
	\newcommand{\OnlyInFull}[1]{}
	\newcommand{\OnlyInShort}[1]{#1}
}{%
	\newcommand{\OnlyInFull}[1]{#1}%
	\newcommand{\OnlyInShort}[1]{}%
}%

% Macro for comments:
\newcommand{\compilehidecomments}{false}%HIDE comments
\ifthenelse{ \equal{\compilehidecomments}{true} }{%
	\newcommand{\wei}[1]{}
	\newcommand{\xutong}[1]{}
	\newcommand{\jinhang}[1]{}
	\newcommand{\siwei}[1]{}
        \newcommand{\carlee}[1]{}
        \newcommand{\xiangxiang}[1]{}

}{
\newcommand{\wei}[1]{{\color{blue}{[Wei: #1]}}}
\newcommand{\xutong}[1]{{\color{green} [Xutong: #1]}}
\newcommand{\jinhang}[1]{{\color{orange} [\text{Jinhang:} #1]}}
\newcommand{\siwei}[1]{{\color{red} [\text{Siwei:} #1]}}

\newcommand{\xiangxiang}[1]{{\color{teal} [\text{Xiangxiang:} #1]}}
}

\newcommand\blfootnote[1]{%
  \begingroup
  \renewcommand\thefootnote{}\footnote{#1}%
  \addtocounter{footnote}{-1}%
  \endgroup
}

% squeeze
% \setlength{\textfloatsep}{0.3\textfloatsep}
% \setlength{\dbltextfloatsep}{0.3\dbltextfloatsep}
% \setlength{\floatsep}{0.3\floatsep}
% \setlength{\dblfloatsep}{0.3\dblfloatsep}
% \setlength{\belowdisplayskip}{0.3\baselineskip}

% \usepackage{titlesec}
% \titlespacing{\section}{0pt}{5pt plus -1pt minus 1pt}{0pt plus 1pt minus 1pt}
% \titlespacing{\subsection}{0pt}{3pt plus -1pt minus 1pt}{-2pt plus 1pt minus 2pt}
% \titlespacing{\subsubsection}{0pt}{\parskip}{-\parskip}
% \makeatletter
% \def\thm@space@setup{
% \thm@preskip=0.8\topsep
% \thm@postskip=\thm@preskip % or whatever, if you don't want them to be equal
% }
% \makeatother

\allowdisplaybreaks

\newenvironment{talign*}
 {\csname align*\endcsname}
 {\endalign}

% algorithms
\makeatletter

\makeatother

\begin{document}

% \title{Exploit Variance-Adaptive Algorithms for Combinatorial Online Network Optimization}
% \title{Variance-Adaptive Algorithm for Probabilistic Maximum Coverage Bandits with General Feedback \thanks{$^\dagger$Xutong Liu and Jinhang Zuo are co-first authors. $^*$Hong Xie is the corresponding author. The work of John C.S. Lui was supported in part by RGC's SRFS2122-4S02. The work of Hong Xie was supported by Chongqing Talents: Exceptional Young Talents Project (cstc2021ycjhbgzxm0195). The work of Jinhang Zuo and Carlee Joe-Wong was supported by the Office of Naval Research (N000142112128) and the US National Science Foundation (CNS-2103024).} }
% \title{Variance-Adaptive Algorithm for Context-Aware Probabilistic Maximum Coverage Bandits: Theory and Network Applications}

\title{Semantic Caching for Low-Cost LLM Serving: \\ From Offline Learning to Online Adaptation}

% \title{Conference Paper Title*\\
% {\footnotesize \textsuperscript{*}Note: Sub-titles are not captured in Xplore and
% should not be used}
% \thanks{Identify applicable funding agency here. If none, delete this.}
% }
% \author{
% Anonymous Authors
% }

\author{
\IEEEauthorblockN{Xutong Liu$^{*1}$, Baran Atalar$^{*2}$, Xiangxiang Dai$^{\dagger3}$, Jinhang Zuo$^{4}$, Siwei Wang$^{5}$,\\ John C.S. Lui$^{3}$, Wei Chen$^{5}$, Carlee Joe-Wong$^{2}$}
\IEEEauthorblockA{
% \textit{dept. name of organization (of Aff.)} \\
$^1$University of Washington,
$^2$Carnegie Mellon University, 
$^3$The Chinese University of Hong Kong, 
\\$^4$City University of Hong Kong,
$^5$Microsoft Research
% Hong Kong \\
% \\Email: \{xutongl, batalar, cjoewong\}@andrew.cmu.edu, \{xxdai23, cslui\}@cse.cuhk.edu.hk, jinhangzuohzuo@gmail, \{jkwang18,ydxu\}@fudan.edu.cn}
}}

\maketitle

\begin{abstract}
Large Language Models (LLMs) are revolutionizing how users interact with information systems, yet their high inference cost poses serious scalability and sustainability challenges.  Caching inference responses, allowing them to be retrieved without another forward pass through the LLM, has emerged as one possible solution. Traditional exact-match caching, however, overlooks the semantic similarity between queries, leading to unnecessary recomputation. Semantic caching addresses this by retrieving responses based on semantic similarity, but introduces a fundamentally different cache eviction problem: one must account for mismatch costs between incoming queries and cached responses. Moreover, key system parameters, such as query arrival probabilities and serving costs, are often unknown and must be learned over time. Existing semantic caching methods are largely ad-hoc, lacking theoretical foundations and unable to adapt to real-world uncertainty.
In this paper, we present a principled, learning-based framework for semantic cache eviction under unknown query and cost distributions. We formulate both offline optimization and online learning variants of the problem based on the combinatorial multi-armed bandit framework, and develop provably efficient algorithms with state-of-the-art guarantees. We also evaluate our framework on a synthetic dataset, showing that our proposed algorithms perform matching or superior performance compared with baselines. %in terms of loss. 

\end{abstract}

% \begin{IEEEkeywords}
% component, formatting, style, styling, insert
% \end{IEEEkeywords}
\section{Introduction}
The emergence of large language models (LLMs) such as GPT-4 and Gemini has enabled remarkable advances in natural language understanding, reasoning, and multimodal capabilities, powering applications ranging from coding assistants to conversational search engines. Yet this power comes at a steep cost: each query can require billions of floating point operations, making LLM inference orders of magnitude more expensive than conventional web queries~\cite{samsi2023words}. As usage continues to rise, service providers face growing concerns over latency, compute efficiency, and energy consumption.
\blfootnote{$^*$Xutong Liu and Baran Atalar are co-first authors. $^\dagger$Xiangxiang Dai is the corresponding author. The work of Carlee Joe-Wong is supported by NSF grant 2312761 and Department of Energy grant DESC0025652. The work of John C.S. Lui is supported in part by RGC GRF-14202923. Xiangxiang Dai is supported by the National Natural Science Foundation of China (625B2163).}

A natural way to mitigate these costs is through caching---reusing previously computed responses when similar queries reappear. However, most existing LLM caching frameworks are built on exact string or token matching \cite{zhu2023optimal,liu2025offline}, which are ill-suited to LLM workloads. Semantically equivalent queries such as ``What is LLM caching?'' and ``How does caching work in large language models?'' are treated as completely different entries,
% \wei{These two queries indeed look quite different. It requires the knowledge that white house is in the capital of the United States. Is this the semantic caching we are seeking for, or do we want to give an easier example?}
 resulting in frequent cache misses even when a suitable response exists. Recent research shows that over 30\% of LLM queries are semantically similar \cite{gill2024meancache}. This exposes a major inefficiency: today's cache systems are blind to semantic similarity and conversational context. 

To address this gap, recent work has explored the problem of semantic caching, i.e., designing caching systems that reason about the meaning and context of queries, rather than their exact text
\cite{Bang2023GPTCacheAO, stogiannidis2023cache,li2024scalm, gill2024meancache}. For example, GPTCache implements an industry-scale semantic caching framework by embedding prompts into vector spaces and storing responses in a vector database for efficient similarity-based retrieval. Building on this foundation, follow-up work has incorporated student–teacher models~\cite{stogiannidis2023cache}, hierarchical clustering~\cite{li2024scalm}, and federated learning~\cite{gill2024meancache} to further improve hit rates and cost savings.

Despite these advances, there still exist limitations that can be significantly improved. First, most existing systems assume that query arrival distributions and response costs are known a priori---an assumption that rarely holds in real and uncertain deployment environments. Second, many current approaches are ad hoc and data-driven, lacking theoretical foundations and rigorous performance guarantees that are essential for understanding and optimizing semantic caching at scale.
% \xiangxiang{Is it necessary to mention the actual impact for 'lacking formal modeling'? like 'limiting their scalability and reliability in high-stakes applications like real-time search engines'}
These gaps lead to a fundamental open question:
\textit{Can we build a principled semantic caching framework that optimizes the total cost under uncertainty, with provable performance guarantees?}

\subsection{Our Contributions}
To answer this question, our contributions are as follows:

\noindent\textbf{(1) Semantic Caching Model:}
We introduce a novel semantic caching framework that generalizes traditional exact-match caching by incorporating a \emph{mismatch cost}, which models potential utility loss when serving semantically similar, but not identical, responses. This introduces a new algorithmic challenge: balancing the mismatch cost against the \emph{serving cost} incurred by querying the LLM for a fresh response. We formalize this trade-off via a unified loss function that supports \emph{arbitrary distance metrics} over queries. Based on this model, we systematically study three settings of increasing uncertainty: (i) the \emph{oracle setting}, where both query arrival probabilities and serving costs are known;
(ii) the \emph{offline learning setting}, where these parameters are unknown but learned from a static dataset; and
(iii) the \emph{online adaptive setting}, where the agent learns and adapts in real time from partial feedback.
% \xiangxiang{seems that the specific meanings of "oracle setting", "offline learning setting" and "online adaptive setting" were not briefly explained; for example, maybe lost in the meaning of 'known parameters'}
% \xiangxiang{The descriptions of the three Settings do not clearly state why they are arranged in this order (from simple to complex) or how they are interrelated? 'we systematically tackle three settings, from fully known to fully adaptive environments:'?}

\noindent\textbf{(2) Algorithm Design:}
For the \emph{oracle setting}, we show that computing the optimal cache is NP-hard even with full knowledge of parameters. We then prove that the loss function is non-increasing and supermodular, and design the \texttt{Reverse Greedy} algorithm with provable approximation guarantees.
For the \emph{offline learning setting}, where serving costs and query arrival probabilities are unknown but historical data is available (e.g., logs or pre-deployment feedback \cite{zhu2023optimal,liu2025offline}), we develop a pessimistic learning algorithm, \AlgOne. It estimates the parameters while penalizing uncertainty due to limited historical data, and integrates with \texttt{Reverse Greedy} to yield robust cache decisions with finite-sample guarantees.
For the \emph{online adaptive setting}, we face a unique challenge: balancing exploration and exploitation under a \emph{low-switching constraint}, since each cache update incurs additional serving cost. We propose \AlgTwo, which combines stage-based cache switching with the principle of optimism in the face of uncertainty, and is provably efficient with minimal switching.

\noindent\textbf{(3) Theoretical Guarantees:}
We provide rigorous theoretical results across all three settings grounding to combinatorial multi-armed bandits frameworks. In the oracle setting, we show our algorithm achieves near-optimal approximation with low time complexity, and can be improved to exact optimality with increased computation.
In the offline setting, we prove the suboptimality gap of our algorithm scales as $\tilde{O}(\sqrt{m/n})$, where $m$ is the number of queries, $n$ is the number of samples, and $\tilde{O}$ ignores the log factors.
In the online setting, we prove a regret bound of $\tilde{O}(\sqrt{mT})$ over $T$ rounds, with only $O(m \log \log T)$ cache switches.
Notably, our offline and online algorithms improve upon prior methods \cite{zhu2023optimal} and match state-of-the-art performance in the special case of exact-match caching \cite{liu2025offline}.

\noindent\textbf{(4) Performance Evaluation:} We conduct extensive experiments on synthetic datasets for offline and online settings. We show that the reverse greedy algorithm closely approximates the optimal cache with minimal loss. In the online setting, our algorithms consistently outperform baselines regarding average regret, with at least 11.75\% improvements observed across varying cache sizes, query distributions, and time horizons. Moreover, \AlgTwo~achieves the lowest number of cache switches and running time (reducing up to 85\%) while maintaining strong regret results.

% \noindent\textbf{(4) Performance Evaluation:} We test the performance of our proposed algorithms on a synthetic dataset for the offline and online settings. We show the success of our reverse greedy approximation, variation of the average regret with the cache size, the number of unique queries and with the number of rounds compared with other baselines. We also demonstrate that \AlgTwo~achieves much lower number of cache switches when compared with other algorithms.

\subsection{Related Work}

\textbf{Low-Cost LLM Serving.}
There have been tremendous efforts recently to reduce the inference cost and latency of large language model (LLM) serving, through approaches such as model quantization~\cite{dettmers2022gpt3,xiao2023smoothquant}, speculative decoding~\cite{leviathan2023fast,spector2023accelerating}, and cloud-edge offloading \cite{miao2024spotserve,fu2024serverlessllm}. A particularly promising line of work leverages caching to trade memory for cost or latency reduction. Existing studies explore caching at various levels: attention-level (KV-cache)~\cite{Pope2022EfficientlyST, Kwon2023EfficientMM, Sheng2023HighthroughputGI, Bang2023GPTCacheAO}, query-level~\cite{Gim2023PromptCM, gill2024meancache, li2024scalm, zhu2023optimal, liu2025offline}, and model/API-level~\cite{Qu2024MobileEI, Dai2024CostEffectiveOM, Feng2024GraphRouterAG}. Our work falls under the query-level LLM caching category. Unlike prior query-level semantic caching studies~\cite{Gim2023PromptCM, gill2024meancache, li2024scalm} that focus primarily on data-driven heuristics or assume known system parameters, our work provides the first unified treatment of semantic caching under uncertainty based on the combinatorial multi-armed bandits framework \cite{chen2013combinatorial,wang2017improving,liu2022batch,liu2025offline}, with provable theoretical guarantees in both offline and online learning settings.

\textbf{Offline Optimization and Online Learning for Caching.}
Classical cache eviction algorithms assume known query arrival probabilities and costs. For non-uniform arrivals, policies such as Least Frequently Used (LFU) and Least Recently Used (LRU) are commonly employed~\cite{lee2001lrfu}. When both arrival rates and costs vary, the Least Expected Cost (LEC) policy was proposed~\cite{bahn2005web}. More recent work has examined statistical learning for caching in both offline and online contexts~\cite{salem2024online,zhu2023optimal,liu2025offline}, but these efforts are largely limited to exact-match caching. We extend this line of work to semantic caching under unknown parameters, which enables us to use responses from semantically similar queries in the cache.
%
%which changes our ability to observe, and thus learn, query costs due to the potential for queries to be served by semantically similar ones in the cache. 
Our algorithm design and theoretical results recover prior results~\cite{zhu2023optimal} in the degenerate (exact match) case. %designing algorithms with theoretical guarantees that recover or improve previous results in the degenerate (exact-match) case. 
Finally, while there is a line of work that studies similarity caching \cite{pandey2009nearest,chierichetti2009similarity,garetto2020similarity,salem2024online}, their formulation differs in both the loss function and algorithmic design, and does not explicitly control cache switching costs as we do.

\section{System Model}
In this section, we present the system model of the semantic caching problem for LLM serving, as shown in \cref{fig:llm_cache}.
\begin{figure}
    \centering
    \includegraphics[width=0.95\linewidth]{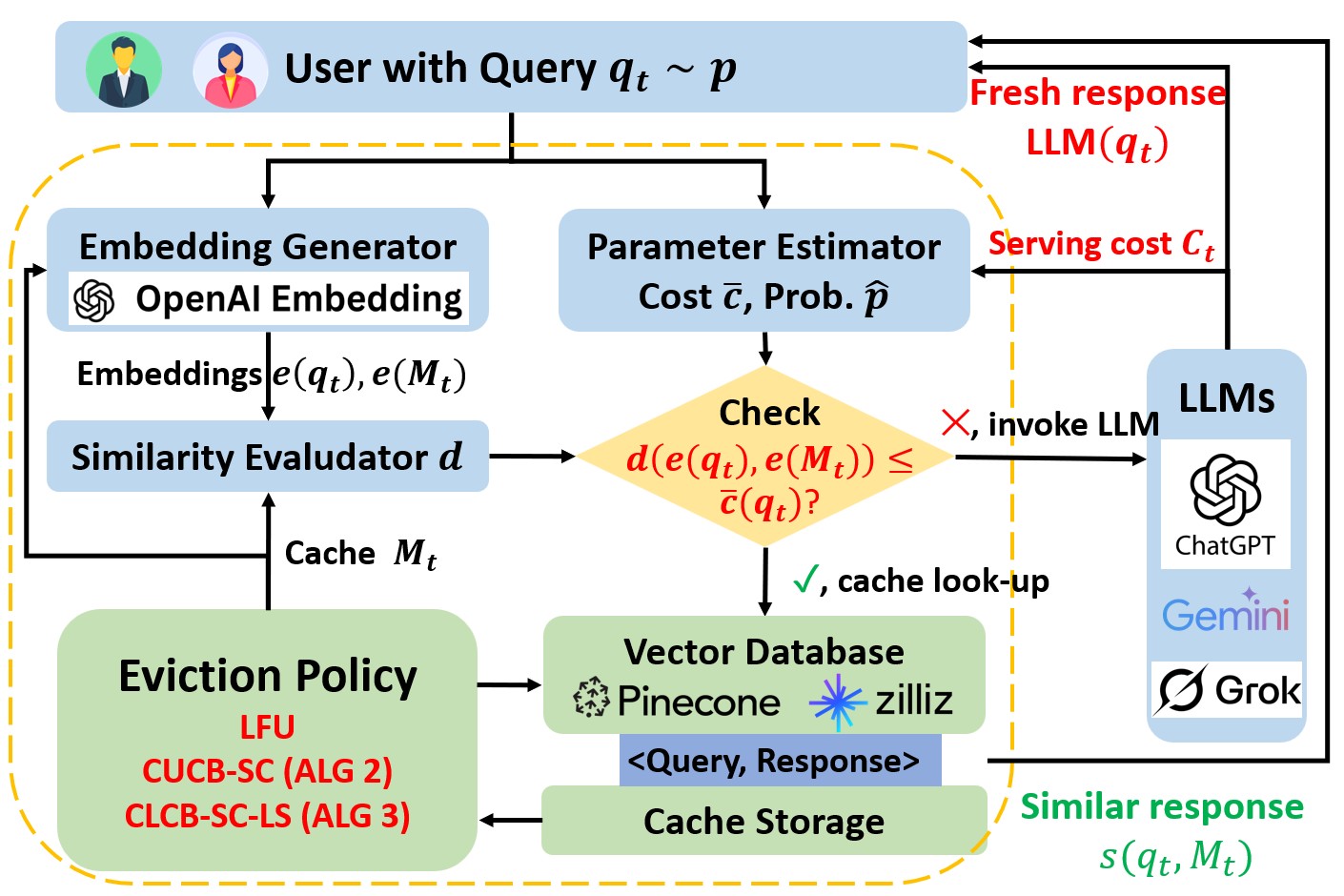}
    \caption{Semantic Caching System for Low-Cost LLM Serving
 %    \wei{The embedding $e(q_t)$ is not used in the figure. Perhaps when doing distance comparison, use$d(e(q_t),e(M_t))$? But this also requires
 %    embedding of the cache entries, so perhaps need to add a module for embedding the cache?
	% I noticed that $d(q_1, q_2)$ is defined later as $d(e(q_1),e(q_2))$, so perhaps it will not be written in $d(e(q_t),e(M_t))$. But it would be better
	%  to lead the arrow for cache to the embedding generator, and the generator emits both $e(q_t)$ and $e(M_t)$?}
     }
    \label{fig:llm_cache}
\end{figure}

\subsection{LLM Serving System}
We consider an LLM serving system with the following core components.

\noindent\textbf{Queries and Embeddings.}
Let $\cQ = \{q_1, \ldots, q_m\}$ denote a finite set of $m \in \mathbb{Z}_+$ distinct queries, where each query $q_i \in \cX$ is a natural language prompt (e.g., ``What is LLM caching?'') drawn from a language space $\cX$.
The system operator (i.e., learning agent) has access to a large language model $(\texttt{LLM})$ (e.g., ChatGPT) and an associated embedding generator $e: \cX \rightarrow \mathbb{R}^{d_e}$ (e.g., OpenAI Embedding API), which maps each query into a $d_e$-dimensional vector representation.

\noindent\textbf{Arrival Probabilities and Serving Costs.}
Each query $q \in \cQ$ is associated with an unknown expected serving cost $c(q) \in (0, 1]$ and an unknown query arrival probability $p(q) \in (0, 1]$, such that $\sum_{q \in \cQ} p(q) = 1$. For example, this cost could represent the latency or the computational resources expended from a forward pass through the LLM.
At each round $t$, whenever the agent invokes the \texttt{LLM} to serve a query $q$, it incurs a random cost $C_t(q) = c(q) + \epsilon_t(q)$, where $\epsilon_t(q)$ is zero-mean sub-Gaussian noise capturing cost variability (which is due to fluctuating latency or random output token length) \cite{zhu2023optimal}.

\noindent\textbf{Similarity Distance and Mismatch Costs.}
We define a similarity distance function $d: \cQ \times \cQ \rightarrow \mathbb{R}_+$, where a smaller value indicates stronger semantic similarity (e.g., the distance between ``What is LLM caching?'' and ``How does caching work in large language models?'' is small).
We adopt the Euclidean distance as the default similarity metric: $d(q_1, q_2) = \|e(q_1) - e(q_2)\|_2$, yet we can generalize to \textit{any metric}, such as cosine distance $d(q_1, q_2)=e(q_1)^{\top}e(q_2)/(\norm{e(q_1)}_2 \norm{e(q_2)}_2)$, without changing the results of the current work. 

We further define the distance from a query $q$ to a set $\cM \subseteq \cQ$ as $d(q, \cM) \coloneqq \min_{u \in \cM} d(q, u)$, and let $s(q, \cM) \coloneqq \argmin_{u \in \cM} d(q, u)$ denote the closest cached query to $q$.

When a query $q$ is served, the agent may either invoke the \texttt{LLM} to generate a fresh response $a(q)$, or reuse a cached response $a(u)$ from a similar query $u \in \cM$, where $\cM$ represents the set of cached queries and their responses.
Reusing a cached response incurs an expected \emph{mismatch cost} $\gamma \cdot d(q, u)$, where $\gamma > 0$ is a scaling parameter. Since our results hold for any distance function $d$, we normalize this cost such that (i) the fresh response of $q$ incurs zero expected mismatch cost, and (ii) we set $\gamma = 1$ without loss of generality. 
% \carlee{Can this handle varying answers to the same query, so long as they all have zero mismatch?}

\subsection{Semantic Caching for Low-cost LLM Serving}\label{sec:SC_serving}
To reduce inference cost, the system maintains a semantic cache $\cM$ containing at most $k$ query-response pairs:
\[
\cM = \{(q_1, a(q_1)), \ldots, (q_k, a(q_k))\},
\]
stored in a vector database \cite{pan2024survey}.  
Let $\bp = (p(q_1), \ldots, p(q_m))$ and $\bc = (c(q_1), \ldots, c(q_m))$ denote the arrival probability and serving cost vectors, respectively.

In each round $t$, a query $q_t \sim \bp$ arrives. The agent can decide between two actions:
\begin{itemize}
    \item \textbf{Cache Lookup:} Return a cached response $a(q)$ for some $q \in \cM$, incurring a mismatch cost $d(q_t, q)$;
    \item \textbf{LLM Query:} Invoke the LLM to generate a fresh response $a(q_t)$, incurring the expected cost $c(q_t)$.
\end{itemize}

Obviously, the optimal decision rule is to compare the cost of querying the LLM $c(q)$ with the mismatch cost $d(q_t, \cM)$ incurred by using the most similar cached query $s(q_t, \cM)= \argmin_{u\in \cM} d(q_t,u)$: 
% \wei{$s(q_t, \cM)$ is not defined explicitly, perhaps writing $s(q_t, \cM) = \argmin_{u\in \cM} d(q_t,u)$?}
\begin{align}
    a_t = 
    \begin{cases}
        \texttt{LLM}(q_t) & \text{if } c(q_t) \le d(q_t, \cM), \\
        a(s(q_t, \cM)) & \text{otherwise}.
    \end{cases}
    \label{eq:rule}
\end{align}

This rule then raises the question: \textit{how should we determine the query-response pairs stored in the cache?} In the remainder of the paper, we seek to answer this question.

% \xutong{optimize over $k \times d$ vector, whether differentiable, benefit of using adaptive discretization}
% \noindent\textbf{Objective Function.}
Given the random query arrivals, the expected loss incurred by a cache $\cM$ is defined as:
\begin{align}\label{eq:loss}\textstyle
    \ell(\cM; \bp, \bc, d) = \sum_{q \in \cQ} p(q) \cdot \min\{c(q), d(q, \cM)\}
\end{align}
when using the optimal decision rule in \eqref{eq:rule}.
Our goal is to find optimal $\cM^*$ of size at most $k$ that minimizes this loss:
\begin{align}\label{eq:opt_prob}\textstyle
    \cM^* = \argmin_{\cM \subseteq \cQ, |\cM| \le k} \ell(\cM; \bp, \bc, d).
\end{align}

\begin{remark}[Special Cases of the distance function $d$]\label{rmk:special}
Our framework captures a wide range of semantic and structural caching models by varying the distance function $d$:

\textbf{(i) Bipartite Graph Coverage.}
Consider a threshold-based distance function for some $\epsilon > 0$:
\begin{align}\label{eq:distance_bipartite}
d(q_1, q_2) = 
\begin{cases}
0 & \text{if } \|e(q_1) - e(q_2)\|_2 \le \epsilon, \\
1 & \text{otherwise}.
\end{cases}
\end{align}
This induces a bipartite graph $\cG = (\cU, \cV, \cE)$ where $\cU = \cV = \cQ$ and $(u, v) \in \cE$ if and only if $d(u, v) = 0$. Define $N(\cM) = \{v \in \cV : \exists\, u \in \cM \text{ s.t. } (u,v) \in \cE\}$ as the set of covered nodes. Then the general loss as in \cref{eq:loss} becomes:
\begin{equation}\label{eq:bipartite_loss}\textstyle
    \ell(\cM; \bp, \bc, d) = \sum_{v \notin N(\cM)} p(v) c(v).
\end{equation}
This special case is used in proving NP-hardness (\Cref{lem:NP}).

\textbf{(ii) Exact-Match Cache.}
Setting $\epsilon = 0$ in the threshold distance of \cref{eq:distance_bipartite} reduces the model to exact-match caching as in prior work \cite{zhu2023optimal, liu2025offline}, where only identical queries match. The loss simplifies to
$\ell(\cM; \bp, \bc, d) = \sum_{q \notin \cM} p(q) c(q).$
\end{remark}

In later sections, we conduct a systematic study on how to find $\cM^*$ under different settings of increasing uncertainty:
\begin{enumerate}
     \item \textbf{Oracle setting (\cref{sec:approx}):} $\bp$ and $\bc$ are known a priori;
    \item \textbf{Offline learning setting (\cref{sec:offline}):} an offline dataset is used to estimate the unknown $\bp$ and $\bc$;
    \item \textbf{Online learning setting (\cref{sec:online}):} the agent can adaptively change the cache in each round, and the unknown $\bp,\bc$ are learned through sequential user interactions.
\end{enumerate}

\section{Approximate Solution for Semantic Caching with Known Parameters}\label{sec:approx}
In this section, we study the semantic caching problem when the system parameters are known a priori.
We first establish the computational hardness of the problem \cref{eq:opt_prob}, even for a special case of the general loss:
\begin{lemma}[NP-hardness]\label{lem:NP}
It is NP-hard to compute the optimal cache $\cM^*$ defined in \eqref{eq:bipartite_loss}.
\end{lemma}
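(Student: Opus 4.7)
The plan is to exhibit a polynomial-time reduction from a known NP-hard problem to the decision version of minimizing the loss in \eqref{eq:bipartite_loss}. The natural candidate is \textsc{Set Cover} (equivalently, \textsc{Maximum Coverage}), since the bipartite-graph form of the loss in Remark~\ref{rmk:special}(i) is exactly a weighted coverage objective: minimizing $\sum_{v \notin N(\cM)} p(v)c(v)$ over $\cM$ with $|\cM|\le k$ is equivalent to maximizing $\sum_{v\in N(\cM)} p(v)c(v)$.

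Given a \textsc{Set Cover} instance consisting of a universe $U = \{v_1, \ldots, v_n\}$, a family of subsets $\cS = \{S_1,\ldots, S_m\}$ with $S_j \subseteq U$, and a budget $k$, I would build a semantic caching instance as follows. Take $\cQ = \cU \cup \cV$ with $\cU = \{u_1,\ldots,u_m\}$ (one node per subset) and $\cV = U$ (one node per element). Define the bipartite edge set by $(u_j, v_i) \in \cE$ iff $v_i \in S_j$, and let the threshold distance $d$ in \eqref{eq:distance_bipartite} be induced by embeddings $e(\cdot)$ chosen so that $\|e(u_j) - e(v_i)\|_2 \le \epsilon$ iff $(u_j,v_i)\in\cE$ and $>\epsilon$ otherwise (e.g., by placing each $u_j$ at a canonical coordinate and each $v_i$ within distance $\epsilon$ of exactly the $u_j$'s whose $S_j$ contains it; this is always realizable in sufficiently high dimension). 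Restrict candidate caches to subsets of $\cU$ (which can be enforced by giving nodes in $\cV$ a prohibitively small self-distance and penalizing with $p(\cdot)$ set to zero, or more cleanly by defining the instance with $\cQ = \cU \cup \cV$ but taking only the $\cV$-coordinates as the arriving queries). Finally, set $c(v) = 1$ and $p(v) = 1/n$ uniformly on $\cV$, and $p(u) = 0$ on $\cU$ (up to a standard renormalization and $\epsilon$-perturbation to respect the strict positivity $p,c \in (0,1]$, which does not affect the argument).

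Under this construction, $\ell(\cM;\bp,\bc,d) = \tfrac{1}{n}\bigl(n - |N(\cM)\cap \cV|\bigr)$, and so $\ell(\cM;\bp,\bc,d) = 0$ iff $\cM$ induces a cover of $U$ in $\cS$. Thus deciding whether $\min_{|\cM|\le k}\ell(\cM;\bp,\bc,d) \le 0$ is equivalent to deciding whether a set cover of size $k$ exists, and computing $\cM^*$ would decide \textsc{Set Cover}. Since \textsc{Set Cover} is NP-hard, the claim follows.

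The main obstacle is not the high-level reduction, which is essentially the standard coverage-to-caching correspondence, but the bookkeeping needed to ensure the constructed instance is a legal instance of our model: specifically, realizing an arbitrary bipartite adjacency pattern via a Euclidean distance with a single threshold $\epsilon$, and respecting the constraints $p(q), c(q)\in(0,1]$ and $\sum_q p(q) = 1$. The embedding step is the delicate point; I would handle it by placing the $u_j$ as widely separated anchors and the $v_i$ as convex combinations of the anchors corresponding to the sets containing them, scaled so the desired edges fall under the threshold $\epsilon$ and non-edges fall above. The probability/cost constraints are then handled by adding a negligible uniform perturbation that does not change which caches achieve zero (or near-zero) loss.
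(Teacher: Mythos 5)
Your high-level idea is sound and standard---the bipartite loss \eqref{eq:bipartite_loss} is a complementary weighted-coverage objective, so hardness should come from a coverage problem---but the reduction as written has a genuine gap in exactly the step you flag as delicate. In the model of Remark~\ref{rmk:special}(i) the threshold graph lives on the \emph{single} set $\cQ$ (the paper takes $\cU=\cV=\cQ$), so your construction must control \emph{all} pairwise distances, not just the $\cU$--$\cV$ ones. The convex-combination placement does get the set--element adjacencies right (with $u_j = M e_j$ and $v_i$ the centroid of its member anchors, member distances squared are $M^2(1-1/r_i)$ and non-member distances squared are $M^2(1+1/r_i)$, so $\epsilon=M$ separates them), but it makes no attempt to control element--element distances: two elements with identical set membership land on the \emph{same point}, and elements with heavily overlapping membership also fall within $\epsilon$ of each other. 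Since the cache may contain element-nodes (nothing in the model lets you "restrict candidate caches to subsets of $\cU$", and setting $p(u)=0$ on $\cU$ only affects which queries arrive, not which may be cached), a cache of element-nodes that dominates $\cV$ through these spurious $\cV$--$\cV$ edges achieves zero loss without corresponding to any set cover. So the claimed equivalence "zero loss iff a size-$k$ set cover exists" fails for your instance, and the reduction is not valid as stated. The fix is not trivial either: pushing each $v_i$ off by an orthogonal offset $\delta f_i$ to separate elements requires $\delta^2 > M^2/2$ while keeping member adjacency requires $\delta^2 < M^2/r_i$, which is contradictory for $r_i \ge 2$. A correct realization needs a different embedding (arbitrary graphs are realizable as Euclidean threshold graphs in dimension $O(n)$, but by a construction you would have to supply), or an argument that the residual coverage problem remains hard.

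For contrast, the paper's proof is a one-liner: it rewrites the minimization as $\argmax_{\cM\subseteq\cU,\,|\cM|\le k}\sum_{v\in N(\cM)}p(v)c(v)$ and invokes the known NP-hardness of the maximum vertex coverage problem on bipartite graphs \cite{10.1016/j.dam.2013.05.015}, thereby sidestepping the explicit-embedding bookkeeping entirely. If you want to keep your Set Cover route, the honest version must (i) exhibit an embedding that realizes the intended adjacency on all of $\cQ$, including the intra-side pairs, and (ii) argue that allowing element-nodes in the cache only adds singletons to the set family, which preserves hardness.
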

% \siwei{Should this be a theorem or a lemma? Also, does this lemma only work for Eq. (2), or work for any arbitrary loss function Eq. (5)?}

Given this computational hardness, a natural question is whether a good approximate solution can be obtained efficiently. We answer this in the affirmative by leveraging the non-increasing and supermodular property of the loss function, which enables the use of greedy algorithms with provable guarantees. Intuitively, non-increasing means that as we cache more query-answer pairs, the expected loss decreases; supermodularity further implies that the marginal benefit of caching an additional item diminishes as the cache grows.

\begin{lemma}[Loss Function Properties]\label{lem:supermodular}
The loss function $\ell(\cM; \bp, \bc, d)$ is non-increasing 
% \xiangxiang{non-increasing  'with larger caches'?}
and supermodular regarding the set $\cM$. That is, for any $\cA \subseteq \cB \subseteq \cQ$ and $q \in \cQ \setminus \cB$, we have:
    (i) $\ell(\cB; \bp, \bc, d) \le \ell(\cA; \bp, \bc, d)$, and (ii)
    $\ell(\cA \cup \{q\}; \bp, \bc, d) - \ell(\cA; \bp, \bc, d) \le \ell(\cB \cup \{q\}; \bp, \bc, d) - \ell(\cB; \bp, \bc, d)$.
\end{lemma}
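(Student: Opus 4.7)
The plan is to reduce both claims to properties of a single-query term and then to a standard supermodularity fact about the minimum-of-distances function. Writing $\ell(\cM; \bp, \bc, d) = \sum_{q' \in \cQ} p(q')\, h_{q'}(\cM)$ with $h_{q'}(\cM) := \min\{c(q'), d(q', \cM)\}$, and using that non-negative conic combinations preserve both monotonicity and supermodularity, it suffices to establish both properties for each $h_{q'}$. Non-increasingness is essentially immediate: if $\cA \subseteq \cB$, enlarging the set only adds candidates to the minimum defining $d(q', \cM)$, so $d(q', \cB) \le d(q', \cA)$, and capping by the constant $c(q')$ preserves this inequality, giving $h_{q'}(\cB) \le h_{q'}(\cA)$.

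The main step is supermodularity of $h_{q'}$. My plan is to absorb the cap into the distance by introducing a virtual element $q'_\star \notin \cQ$ with $\tilde{d}(q', q'_\star) = c(q')$ and $\tilde{d}(q', u) = d(q', u)$ for $u \ne q'_\star$, so that $h_{q'}(\cM) = g(\cM \cup \{q'_\star\})$ where $g(\cS) := \min_{u \in \cS} \tilde{d}(q', u)$. The task then reduces to proving the classical supermodularity of the minimum-of-distances function $g$, for which I would use the one-line identity $g(\cS \cup \{q\}) - g(\cS) = \min\{0,\, \tilde{d}(q', q) - g(\cS)\}$. Since $g$ is monotone non-increasing in $\cS$, the right-hand side is no smaller at a larger set, which is exactly the supermodular inequality $g(\cA' \cup \{q\}) - g(\cA') \le g(\cB' \cup \{q\}) - g(\cB')$ for $\cA' \subseteq \cB'$ with $q \notin \cB'$. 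Specializing to $\cA' = \cA \cup \{q'_\star\}$ and $\cB' = \cB \cup \{q'_\star\}$ (which remain nested, with $q \ne q'_\star$ since $q \in \cQ$) transfers the inequality to $h_{q'}$, and weighting by $p(q') \ge 0$ and summing over $q'$ yields the claim for $\ell$.

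The only real obstacle is handling the cap $\min\{c(q'), \cdot\}$, which would otherwise force a messy case analysis on where $d(q', q)$ sits relative to $c(q')$, $d(q', \cA)$, and $d(q', \cB)$; the virtual-element reformulation sidesteps this by folding $c(q')$ into the distance itself, after which the argument reduces to a textbook identity and a single monotonicity observation.
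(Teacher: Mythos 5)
Your proof is correct and follows essentially the same route as the paper's: the same per-query decomposition $\ell = \sum_{q'} p(q') h_{q'}$, the same virtual-element trick absorbing the cap $c(q')$ into the distance, and closure of monotonicity/supermodularity under non-negative combinations. The only difference is cosmetic --- where the paper finishes with a three-way case analysis on the relative values of $\ell_{q'}(\cA)$, $\ell_{q'}(\cB)$, and $d(q',q)$, you use the identity $g(\cS \cup \{q\}) - g(\cS) = \min\{0,\ \tilde{d}(q',q) - g(\cS)\}$ together with monotonicity of $g$, which is a cleaner way to land the same inequality.
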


Based on this supermodularity property, we design a \emph{reverse greedy} algorithm in \cref{alg:rg}. Unlike standard greedy algorithms that add items from scratch, \cref{alg:rg} starts by assuming all query-answer pairs are cached and then iteratively removes the least useful cached item until only $k$ remain. 
% As for the time complexity, for each deletion step $i = 1, \ldots, m-k$, \cref{line:oracle_l} requires $O((m+1-i)m^2 \log m)$ time to evaluate the loss $\ell(\cM_{i-1} \setminus \{q\}; \bp, \bc, d)$ across all $m+1-i$ candidates $q \in \cM_{i-1}$. This is because computing $d(q', \cM_{i-1} \setminus \{q\})$ for each $q' \in \cQ$ takes $O(m \log m)$ time using a min-heap, and must be done $m$ times to complete the summation in \cref{eq:loss}. Similarly, selecting the deletion candidate in \cref{line:oracle_min} takes $O(m \log m)$ time. Summing over all $m-k$ deletion steps yields an overall complexity of $O((m-k)m^3 \log m)$.

\begin{algorithm}[!t]
    \caption{\texttt{Reverse\_Greedy} \cite{il2001approximation}: Removing Least Beneficial Cache Items 
    % \xiangxiang{seems not very intuitive on functions of algorithm, maybe add 'for Removing Least Beneficial Cache Items'?}
    }
    \label{alg:rg}
    \begin{algorithmic}[1]
        \State \textbf{Input:} Queries $\cQ$, probabilities $\bp$, serving costs $\bc$, distance function $d$, cache size $k$
        \State \textbf{Initialize:} $\cM_0 \gets \cQ$
        \For{$i = 1$ to $m-k$}
            \State Compute $\ell(\cM_{i-1} \setminus \{q\}; \bp, \bc, d)$ for all $q \in \cM_{i-1}$\label{line:oracle_l}
            \State Select $q_i = \argmin_{q \in \cM_{i-1}} \ell(\cM_{i-1} \setminus \{q\}; \bp, \bc, d)$\label{line:oracle_min}
            \State Update $\cM_i \gets \cM_{i-1} \setminus \{q_i\}$
        \EndFor
        \State \textbf{Return:} $\cM = \cM_{m-k}$
    \end{algorithmic}
\end{algorithm}

We now provide a theoretical guarantee on the performance of Alg. \ref{alg:rg} following \cite{il2001approximation}. 
Let us define the curvature $c=1 - \min_{q \in \cQ} \frac{\ell(\cQ \backslash \{q\} ; \bp, \bc, d) - \ell(\cQ; \bp, \bc, d) }{\ell(\emptyset;\bp, \bc, d) - \ell(\{q\}; \bp, \bc, d)}$, measuring how much the function deviates from the additive function (whose $c$ is 0). 
% \siwei{Is this almost the same as the proof in existing literatures? Or we have some novel contributions here? Maybe should make it clear?}

% \xiangxiang{do we need to define “curvature” briefly upon first mention? like a measure of how much the function deviates from modularity}
\begin{theorem}[Approximation Guarantee \cite{il2001approximation}]\label{thm:approx}
Let $\cM$ be the solution returned by \Cref{alg:rg}. Then,
$\ell(\cM; \bp, \bc, d) \le \frac{e^\beta - 1}{\beta} \cdot \ell(\cM^*; \bp, \bc, d)$,
where $\beta = \frac{c}{1 - c}$ and $c \in [0, 1]$ is the curvature.
\end{theorem}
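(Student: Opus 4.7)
The plan is to verify that the hypotheses of the reverse greedy approximation theorem of \cite{il2001approximation} are satisfied for the loss $\ell(\cdot; \bp, \bc, d)$ and then invoke it, while also sketching the analysis so it is clear where the factor $(e^\beta-1)/\beta$ comes from. The first step is to translate our minimization setup into the standard submodular maximization form: define $g(\cA) := \ell(\emptyset; \bp, \bc, d) - \ell(\cA; \bp, \bc, d)$, so that by \Cref{lem:supermodular}, $g$ is non-negative, non-decreasing, and submodular with $g(\emptyset)=0$. In these terms, reverse greedy on $\ell$ starting from $\cQ$ and removing $m-k$ elements to minimize $\ell$ is equivalent to running reverse greedy on $g$ starting from the full ground set and keeping the final value $g(\cM)$ as close to $g(\cQ)$ as possible, and the curvature $c$ defined in the theorem statement coincides with the standard total curvature of $g$ at the ground set.

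Second, I would carry out the amortized per-step analysis. At iteration $i$, the greedy choice of $q_i = \arg\min_{q \in \cM_{i-1}} \ell(\cM_{i-1}\setminus\{q\})$ ensures that the incremental loss $\Delta_i := \ell(\cM_i) - \ell(\cM_{i-1})$ is at most the average over all candidates in $\cM_{i-1}$. Using supermodularity (equivalently submodularity of $g$), each marginal increase satisfies $\ell(\cM_{i-1}\setminus\{q\}) - \ell(\cM_{i-1}) \le \ell(\cQ \setminus\{q\}) - \ell(\cQ)$, and the curvature bound translates this uniformly into a comparison with $\ell(\emptyset) - \ell(\{q\})$ up to the factor $1-c$. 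Combining with a swap/exchange argument against the optimum $\cM^*$ (which has size $k$), one derives a one-step recursion of the form
\begin{equation*}
\ell(\cM_i) - \ell(\cM^*) \le \left(1 + \frac{\beta}{m-k-i+1}\right)\bigl(\ell(\cM_{i-1}) - \ell(\cM^*)\bigr),
\end{equation*}
where $\beta = c/(1-c)$.

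The third step is to unroll the recursion over $i=1,\ldots,m-k$, bound the resulting product by $\exp\bigl(\beta \sum_{i=1}^{m-k} \frac{1}{m-k-i+1}\bigr) \le \exp(\beta \cdot H_{m-k})$, and then use the tighter telescoping argument of \cite{il2001approximation} which replaces the harmonic bound by the constant $(e^\beta - 1)/\beta$ via an integral comparison; this yields $\ell(\cM_{m-k}) - \ell(\cM^*) \le \frac{e^\beta - 1}{\beta}\cdot\ell(\cM^*) - \ell(\cM^*)$ after simplification, giving exactly the stated bound.

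The main obstacle is the swap/exchange step that converts the per-iteration greedy optimality into the $(1 + \beta/(m-k-i+1))$ contraction factor, since naive applications of supermodularity only give an additive $1/(m-k-i+1)$ bound with no curvature dependence; the curvature $c$ must be injected by carefully comparing marginal losses on $\cM_{i-1}$ to marginal losses on the full set $\cQ$, which is where the normalization in the definition of $c$ (ratio of marginals at $\cQ$ versus at $\emptyset$) is essential. Once this per-step bound is in hand, the remainder is a routine recursive calculation, and the approximation ratio matches the general supermodular-minimization result of \cite{il2001approximation}.
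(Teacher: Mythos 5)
Your overall strategy---verify via \cref{lem:supermodular} that $\ell$ is non-increasing and supermodular, observe that the curvature $c$ in the theorem statement is exactly the standard total curvature of the complementary monotone submodular function $g(\cA)=\ell(\emptyset;\bp,\bc,d)-\ell(\cA;\bp,\bc,d)$, and then invoke the reverse-greedy guarantee of \cite{il2001approximation}---is precisely what the paper does; its entire proof is a one-line appeal to Corollary 4 of \cite{il2001approximation}, and your first step is correct and is all that is actually needed.

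However, the sketch you give of the internal analysis (your steps 2 and 3) is not a correct reconstruction of that result, so if you intend it as a self-contained argument it has genuine gaps. First, supermodularity applied to removals points the other way: for $\cM_{i-1}\subseteq\cQ$ and $q\in\cM_{i-1}$, \cref{lem:supermodular} gives $\ell(\cM_{i-1}\setminus\{q\})-\ell(\cM_{i-1}) \ge \ell(\cQ\setminus\{q\})-\ell(\cQ)$, not $\le$ as you wrote. Second, your proposed recursion cannot be right as stated: since $\ell$ is non-increasing and $\cM_0=\cQ$, the base quantity $\ell(\cM_0;\bp,\bc,d)-\ell(\cM^*;\bp,\bc,d)$ is nonpositive, so iterating a positive multiplicative factor would force $\ell(\cM_{m-k};\bp,\bc,d)\le\ell(\cM^*;\bp,\bc,d)$, i.e., that reverse greedy is exactly optimal, which is false in general. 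Third, the product $\prod_{j=1}^{m-k}(1+\beta/j)$ grows like $(m-k)^{\beta}$, so it is not bounded by the constant $(e^{\beta}-1)/\beta$ and no ``integral comparison'' can close that gap. The bound in \cite{il2001approximation} instead arises from a recursion that is affine in $\ell(\cM_{i-1};\bp,\bc,d)$ itself (not in the gap to the optimum) with a per-step factor whose denominator is the \emph{fixed} total number of removals $m-k$ rather than the decreasing count $m-k-i+1$; unrolling gives a coefficient of the form $\frac{1}{\beta}\bigl[\bigl(1+\frac{\beta}{m-k}\bigr)^{m-k}-1\bigr]$, which increases monotonically to $(e^{\beta}-1)/\beta$. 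If you only intend to cite the result, drop steps 2--3; if you intend to prove it, the per-step inequality must be rederived in this correct form.
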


% \begin{remark}[Optimal Approximation]
% The approximation factor can be further improved to the optimal $\frac{1}{1 - c}$ ratio by greedily optimizing over a continuous relaxation of the problem together with advanced rounding techniques \cite{sviridenko2017optimal}. However, this method is significantly more computationally expensive than \Cref{alg:rg}, while offering only marginal empirical improvement. Hence, we adopt \Cref{alg:rg} for its favorable trade-off between efficiency and accuracy.
% \end{remark}

\section{Offline Learning for the Semantic Caching Problem with Unknown Parameters}\label{sec:offline}

\begin{algorithm}[!t]
    \caption{\texttt{CUCB-SC}: Combinatorial Upper Confidence Bound Algorithm for the Semantic Caching Problem}
    \label{alg:CUCB-SC}
    \begin{algorithmic}[1]
        \State \textbf{Input:} Dataset $\cD=\left\{ \left( \cM_t, q_t, C_t \right) \right\}_{t=1}^{n}$, queries $\cQ$, solver \texttt{Reverse\_Greedy}, embedding generator $e$, distance function $d$, probability $\delta$.
        % \For{query pairs $(q_1, q_2) \in \cQ \times \cQ$}
        % \State Set distance $d(q_1, q_2)=\norm{ e(q_1) - e(q_2) }_{2}$.\label{line:offline_distance}
        % \EndFor
		\For{query $q \in \cQ$}
            \State Calculate counters $N(q)=\sum_{t=1}^n \I\{q = q_t\}$ and $N_c(q)=\sum_{t=1}^n \I\{q= q_t \text{ and } C_t \neq \emptyset\}$.\label{line:LLM_counter}
            \State Calculate empirical means $\hat{p}(q)=N(q)/n$ and $\,\hat{c}(q){=}\sum_{t \in [n]} \I\{q= q_t \text{ and } C_t \neq \emptyset\}C_t/N_c(q)$.\label{line:LLM_empirical_mean}
            \State Calculate UCB of the cost $\bar{c}(q)=\hat{c}(q)+\sqrt{ \frac{\log (\frac{6mn}{\delta})} {2 N_c(q)} }$.\label{line:LLM_UCB}
        \EndFor
        \State Compute $\hat{\cA}= \texttt{Reverse\_Greedy} \left(\cQ, \hat{\bp}, \bar{\bc}, d, k\right).$\label{line:LLM_oracle}
        \State Call \texttt{LLM} to obtain responses $a(q)$ for $q \in \hat{\cA}$.
        \State \textbf{Return:} $\hat{\cM}=\{q, a(q)\}_{q \in \hat{\cA}}$.\label{line:offline_call_oracle}
	\end{algorithmic}
\end{algorithm}

In this section, we consider the setting where the cost and arrival probabilities are unknown and must be learned from offline data. We first introduce the formal offline learning setup, then present the \AlgOne~algorithm in \Cref{alg:CUCB-SC} for the semantic caching problem, and finally provide its theoretical analysis and implications.

\subsection{Offline Learning Setup}
% \xutong{Data generation distribution} 
We assume access to a historical dataset $\cD = {(\cM_t, q_t, C_t)}_{t=1}^n$ collected by an experimenter, such as logs of user interactions with an LLM service.
% \xiangxiang{maybe add ', such as logs of user interactions with an LLM service,' to give specific example?} \carlee{I agree a specific example would be good here}
where $\cM_t$ is the selected cache at round $t$, $q_t$ is the user query, and $C_t = C_t(q_t)$ is the observed cost. If $C_t = \emptyset$, it indicates that the cached response from the nearest neighbor in $\cM_t$ was used and no cost was observed. If $C_t > 0$, the LLM was invoked and the cost feedback was recorded. Define $\nu(q) = \Pr[C_t(q_t) \ne \emptyset \mid q_t = q]>0$ as the probability of observing the cost feedback for query $q$.
% \wei{$\cD$ is a fixed dataset with no probability. I guess it means when random sample a data, then perhaps we need to say when uniformly sample $t$,
% 	or write it as $\nu(q) = \Pr_{t\sim {\rm Uniform}([n])}[C_t(q_t) \ne \emptyset \mid q_t = q]$.}

Since computing the optimal cache is NP-hard even with known parameters (\Cref{lem:NP}), our goal is to learn an approximate solution that competes with the $\alpha$-approximate oracle defined given by \cref{alg:rg}. Specifically, we aim to minimize the approximate suboptimality:
\begin{align}\label{eq:subopt_def}
\text{SubOpt}(\cM) = \ell(\cM;\bp,\bc,d) - \alpha \cdot \ell(\cM^*; \bp, \bc, d),
\end{align}
where $\alpha$ is the approximation ratio from \Cref{thm:approx}.

% [Key challenges placeholder: include (1) limited offline data, (2) selection bias and distribution shift, and (3) a nonlinear, combinatorially large action space.]

\subsection{Algorithm Design of \AlgOne}
The \AlgOne~algorithm (shown in \Cref{alg:CUCB-SC}) proceeds in two key steps.
% First, it pre-computes all pairwise query distances to construct the embedding-based distance function $d(q_1, q_2)$ for $q_1, q_2 \in \cQ$
% \xiangxiang{'using the embedding-based distance function'?} 
% in \cref{line:offline_distance}.
First, it computes high probability estimates of the arrival probabilities $\bp$ and the serving costs $\bc$. 
For the serving costs, we use the upper confidence bounds (UCB) as pessimistic estimations of the true query costs.
Specifically, we penalize each cost estimation $\hat{c}(q)$ by its confidence interval, $\sqrt{ \log (\frac{6mn}{\delta})/ {2N_c(q)} }$ in \cref{line:LLM_UCB}.
This approach, rooted in the pessimism principle \cite{jin2020provably}, mitigates the impact of high fluctuations in empirical estimates caused by limited observations, effectively addressing the uncertainty inherent in passively collected data.
% \xiangxiang{how about 'This approach, rooted in the pessimism principle \cite{jin2020provably}, which advocates conservative estimates to handle uncertainty, mitigates the impact of high bias in empirical estimates caused by limited observations' to make it possible for readers to quickly understand the background of the pessimistic principle?} \carlee{instead of ``fluctuations'' should we say ``bias'' or ``error'' since our estimate doesn't actually fluctuate (it just might be very off from the true mean)}
For the arrival probabilities, we leverage the full-feedback nature of the dataset and use empirical means $\hat{p}(q)$. Secondly, we invoke the reverse greedy algorithm, which iteratively removes the least beneficial query-response pair based on estimated $\hat{\bp}$ and $\bar{\bc}$, to select the cache $\hat{\cM}$ in \cref{line:offline_call_oracle}. 
% \xiangxiang{how about ', which iteratively removes the least beneficial query-response pair based on estimated $\hat{\bp}$ and $\bar{\bc}$', to select xx}
% [Complexity discussion placeholder.]

\subsection{Theoretical Result and Its Discussion}
We now present the performance guarantee for \AlgOne.

\begin{theorem}[Suboptimality Gap Bound for \cref{alg:CUCB-SC}]\label{thm:LLM_cache_main_offline}
Let $\cD$ be an offline dataset of $n$ samples. Suppose $n \ge \frac{8 \log (1/\delta)}{\min_{q \in \cQ} p(q)\nu(q)}$. Then with probability of at least $1 - \delta$, the cache $\hat{\cM}$ returned by \AlgOne~satisfies:
$\text{SubOpt}(\hat{\cM})
\le 4 \sqrt{\frac{2 \sum_{q \in \cQ} \frac{1}{\nu(q)} \log(\frac{6mn}{\delta})}{n}}.$
% + 2 \sqrt{\frac{2m \log(\frac{3}{\delta})}{n}}.
% If the experimenter always uses an empty cache $\cM_t = \emptyset$ (as in \cite{zhu2023optimal}) in each round, then each data point receives the cost feedback with $\nu(q) = 1$, and the bound simplifies to:
% \begin{align}\label{eq:gap_special}
% \text{SubOpt}(\hat{\cM})
% \le 4 \sqrt{ \frac{2m \log(\frac{6mn}{\delta})}{n} }.
% \end{align}
\end{theorem}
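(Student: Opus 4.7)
The plan is to combine a pessimism-based analysis of the UCB costs with the $\alpha$-approximation guarantee of \Cref{alg:rg}. First, I would set up a good event $\cE$ on which, for every $q \in \cQ$ simultaneously, (i) Hoeffding on the empirical cost gives $|\hat{c}(q) - c(q)| \le \sqrt{\log(6mn/\delta)/(2N_c(q))}$, which together with \cref{line:LLM_UCB} yields the UCB bracket $c(q) \le \bar{c}(q) \le c(q) + 2\sqrt{\log(6mn/\delta)/(2N_c(q))}$; (ii) Hoeffding on the empirical probability controls $|\hat{p}(q) - p(q)|$; and (iii) a multiplicative Chernoff bound gives $N_c(q) \ge n p(q)\nu(q)/2$. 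The sample-size hypothesis $n \ge 8\log(1/\delta)/\min_q p(q)\nu(q)$ is exactly what is needed for (iii) to hold uniformly in $q$, and a union bound over $q \in \cQ$ and the three sub-events yields $\Pr[\cE] \ge 1-\delta$.

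Next, I would introduce the surrogate loss $\tilde{\ell}(\cM) \coloneqq \ell(\cM;\hat{\bp},\bar{\bc},d)$ that the algorithm actually minimizes and decompose
\begin{align*}
\text{SubOpt}(\hat{\cM}) &= \bigl[\ell(\hat{\cM};\bp,\bc,d) - \tilde{\ell}(\hat{\cM})\bigr] + \bigl[\tilde{\ell}(\hat{\cM}) - \alpha\,\tilde{\ell}(\cM^*)\bigr] \\
&\quad + \alpha\bigl[\tilde{\ell}(\cM^*) - \ell(\cM^*;\bp,\bc,d)\bigr].
\end{align*}
The middle bracket is non-positive because $\hat{\cM}$ is the output of \texttt{Reverse\_Greedy} on $(\hat{\bp},\bar{\bc})$, so \Cref{thm:approx} yields $\tilde{\ell}(\hat{\cM}) \le \alpha\min_{|\cM|\le k}\tilde{\ell}(\cM) \le \alpha\tilde{\ell}(\cM^*)$.

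For the remaining two brackets, I would exploit the monotonicity of $x \mapsto \min\{x, d(q,\cM)\}$ together with the pessimism $\bar{c}\ge c$. In the first bracket, adding and subtracting $\sum_q \hat{p}(q)\min\{c(q),d(q,\hat{\cM})\}$ cancels the piece $\hat{p}(q)[\min\{c,d\}-\min\{\bar{c},d\}]\le 0$ and leaves $\sum_q (p(q)-\hat{p}(q))\min\{c(q),d(q,\hat{\cM})\}$, which a Bernstein step plus Cauchy-Schwarz (using $\sum_q p(q)=1$) bounds by $O(\sqrt{m\log(6mn/\delta)/n})$. A symmetric manipulation of the third bracket, this time combined with $\min\{\bar{c},d\}-\min\{c,d\}\le \bar{c}-c$, splits it into a probability-error piece of the same order and the cost-error piece $\sum_q p(q)(\bar{c}(q)-c(q))$. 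Substituting the counter bound $N_c(q) \ge np(q)\nu(q)/2$ gives
\begin{align*}
\sum_q p(q)(\bar{c}(q) - c(q)) \;\le\; 2\sqrt{\tfrac{\log(6mn/\delta)}{n}}\,\sum_q \sqrt{p(q)/\nu(q)},
\end{align*}
and a final Cauchy-Schwarz using $\sum_q p(q)=1$ produces $2\sqrt{\log(6mn/\delta)\sum_q (1/\nu(q))/n}$. Because $\sum_q 1/\nu(q) \ge m$, the $O(\sqrt{m\log/n})$ probability-error residuals are absorbed, and gathering constants delivers the advertised $4\sqrt{2\sum_q(1/\nu(q))\log(6mn/\delta)/n}$ bound.

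The main obstacle is the careful bookkeeping around the non-additive $\min\{\cdot,\cdot\}$ operator: unlike the exact-match case where $\ell$ is linear in $(\bp,\bc)$, here the intermediate mixed losses must be chosen so that exactly the right $\bar{c}\ge c$ inequality discards the hardest cross-term, leaving only concentration-controlled residuals. Once this decoupling is in place, the probability-estimation error cleanly separates from the $\nu(q)^{-1}$-weighted cost-estimation error and the Cauchy-Schwarz step above yields the stated rate.
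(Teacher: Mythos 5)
Your proposal is correct and follows essentially the same route as the paper's proof: the identical three-way decomposition into pessimism, greedy, and estimation gaps, killing the middle term via the \texttt{Reverse\_Greedy} approximation guarantee, using $\bar{\bc}\ge\bc$ with monotonicity of $\ell$ in $\bc$ to discard the cross-term, and then the counter bound $N_c(q)\ge np(q)\nu(q)/2$ plus Cauchy--Schwarz to obtain the $\sqrt{\sum_q \nu(q)^{-1}\log(6mn/\delta)/n}$ rate. The only cosmetic difference is that you derive the $\ell_1$ probability deviation via Bernstein/Cauchy--Schwarz where the paper invokes the Weissman et al.\ bound directly, which yields the same order.
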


\begin{remark}[Discussion of \cref{thm:LLM_cache_main_offline}]
% \xutong{Double-check $Q$ or $Q\backslash \cM^*$}
    According to \cref{thm:LLM_cache_main_offline}, the suboptimality gap decreases at a rate of $\sqrt{1/n}$ with respect to the number of offline data samples $n$. In addition, the gap scales proportionally with $\sqrt{\sum_{q \in \cQ} 1/\nu(q)}$, where $\nu(q)$ can be interpreted as the data generation rate for query $q$. 
    % Hence, $\sum_{q \in \cQ} 1/\nu(q)$ represents the expected number of samples required to observe at least one cost realization for each query in $\cQ$. \siwei{Is this true? Even consider the case that $\nu(q)$ is always 1, this expectation should be related to $p$, right?} 
    In the special case where $\nu(q) = 1$ for all $q$ meanwhile the cache operates in an exact-match setting (see \cref{rmk:special}.(ii)), our formulation reduces to the setting in Zhu et al.~\cite{zhu2023optimal}. Under this regime, our result improves upon theirs by a factor of $O(k/\sqrt{C_1})$, where $C_1 = \min_{q \in \cQ} c(q)$ is the minimal query cost. This improvement arises from our separate treatment of arrival and cost distributions, leveraging their distinct feedback structures, in contrast to the coupled treatment in Zhu et al.~\cite{zhu2023optimal}.
    % \carlee{explain why--because we handle the arrival and cost variables differently given their feedback?}
    More recently, Liu et al.~\cite{liu2025offline} achieve a similar improvement while also operating in the degenerate exact-match cache setting. Our work not only matches their theoretical guarantees but also extends them to the more general semantic caching setting.
    % \wei{So if for some $q$ $\nu(q)=0$, the gap would be infinity? That means if there exists one query that we don't have cost data, there is no hope
    % 	to find the optimal solution? It may be reasonable, but thinking more about it, it is difficult to cover ALL
    % 	queries in a dataset, so is it really reasonable to ask the offline data has cost data for EVERY query? Is it enough to say that the offline data
    % 	can cover all queries, meaning every query $q$ has a neighboring query $q'$ that could be eligible as the cached output entry for $q$ in the offline data?
    % 	That is, every $q$ has a query $q'$ in the offline data with $d(q,q') \le c(q)$ ? Then, the sub-optimality gap should be revised accordingly. Please check.
    	
    % 	I noticed another potential issue: 
    % 	the offline data seems not related to the query distribution $\bp$? Do we require that the offline data is sampled following $\bp$?
    % 	If not, I can understand the above result related to $\nu(q)$, and it is a weaker assumption, and we may want to point this out.
    % 	If we do assume that the offline data is sampled from $\bp$, then perhaps we can get a better result not depending on $1/\nu(q)$. Suppose a $q$ is never
    % 	queried, i.e. $\bp(q)=0$, then it will not appear in the offline data, but it should not affect the suboptimality gap. So the suboptimality gap should
    % 	not depends on $1/nu(q)$. It should be related to $\bp(q)$, perhaps something like $\bp(q)/\nu(q)$? Need some check.
    % 	}
\end{remark}

\subsection{Suboptimal Gap Analysis} 
% \carlee{I think we can eliminate some of the equations below and instead add some text on how this proof outline differs from prior analyses}
To establish the result in \Cref{thm:LLM_cache_main_offline}, we begin with the following concentration bounds.

\begin{lemma}[Concentration of Estimates \cite{liu2025offline}]\label{lem:offline_concen}
For any $\delta > 0$, define the following events:
% \begin{align}
% \cE_{\text{arvl}} &\coloneqq \left\{ \sum_{q \in \cQ}|\hat{p}(q) - p(q)| \le \sqrt{ \frac{2m \log(2/\delta)}{n} } \right\}, \\
% \cE_{\text{cost}} &\coloneqq \left\{ |\hat{c}(q) - c(q)| \le \sqrt{ \frac{\log(2mn/\delta)}{2N_c(q)} }, \; \forall q \in \cQ \right\}, \\
% \cE_{\text{counter}} &\coloneqq \left\{ N_c(q) \ge \frac{n \cdot p(q) \nu(q)}{2}, \; \forall q \in \cQ \right\}.
% \end{align}
$\cE_{\text{arvl}} \coloneqq \Big\{ \sum_{q \in \cQ}|\hat{p}(q) - p(q)| \le \sqrt{ \frac{2m \log(2/\delta)}{n} } \Big\}, \quad
\cE_{\text{cost}} \coloneqq \Big\{ |\hat{c}(q) - c(q)| \le \sqrt{ \frac{\log(2mn/\delta)}{2N_c(q)} }, \; \forall q \in \cQ \Big\}, \quad
\cE_{\text{counter}} \coloneqq \Big\{ N_c(q) \ge \frac{n \cdot p(q) \nu(q)}{2}, \; \forall q \in \cQ \Big\}.$
Then, assuming $n \ge \frac{8 \log (m/\delta)}{\min_{q} p(q) \nu(q)}$, all three events occur with probability at least $1 - \delta$.
\end{lemma}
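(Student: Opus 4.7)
The plan is to prove the three concentration events by handling each with standard tools, then take a union bound, allotting probability budget $\delta/3$ to each event. Before starting, note that $N_c(q) = \sum_{t=1}^n \bOne\{q_t = q, C_t \neq \emptyset\}$ is a sum of $n$ i.i.d.\ Bernoulli random variables with mean $p(q)\nu(q)$, while $\hat{p}(q)$ is an empirical proportion of $n$ i.i.d.\ draws from $\bp$, and conditional on $N_c(q)$, the estimator $\hat{c}(q)$ is an average of $N_c(q)$ i.i.d.\ sub-Gaussian cost observations with mean $c(q)$. These structural facts are what enable the three bounds.

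First I would handle $\cE_{\text{counter}}$ using the multiplicative Chernoff bound on each $N_c(q)$: $\PR\!\left[N_c(q) \le n p(q)\nu(q)/2\right] \le \exp(-n p(q)\nu(q)/8)$. A union bound over $q \in \cQ$ combined with the sample-size assumption $n \ge 8\log(m/\delta)/\min_{q} p(q)\nu(q)$ ensures that $\cE_{\text{counter}}$ holds with probability at least $1-\delta/3$ (up to adjusting the constant inside the assumption, which is why the statement allows slack in the logarithmic term). Next, for $\cE_{\text{cost}}$, I would fix any query $q$ and condition on the $\sigma$-algebra generated by $(q_1, \dots, q_n)$ and by which rounds invoked the LLM, so that $\hat{c}(q)$ is an empirical mean of $N_c(q)$ conditionally independent sub-Gaussian variables. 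Applying Hoeffding's (or the sub-Gaussian) inequality for each fixed value of $N_c(q) \in \{1, \dots, n\}$, and union-bounding over the $n$ possible values as well as over the $m$ queries, yields the $\log(2mn/\delta)/(2N_c(q))$ confidence width with failure probability at most $\delta/3$. This ``union bound over the random sample size'' is the standard device for handling the randomness of $N_c(q)$.

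For $\cE_{\text{arvl}}$, I would apply McDiarmid's inequality to the function $f(q_1,\dots,q_n) = \sum_{q \in \cQ} |\hat{p}(q) - p(q)|$. Swapping a single query $q_t$ changes $f$ by at most $2/n$, so McDiarmid gives $f \le \E[f] + \sqrt{2\log(3/\delta)/n}$ with probability at least $1-\delta/3$. The expectation is bounded via Jensen/Cauchy-Schwarz: $\E[f] \le \sum_q \sqrt{\Var(\hat{p}(q))} \le \sqrt{m/n}$ using $\Var(\hat{p}(q)) = p(q)(1-p(q))/n$ and $\sum_q p(q) = 1$. Combining these two terms gives a bound of order $\sqrt{2m\log(2/\delta)/n}$ after absorbing constants (this is the standard $\ell_1$ concentration bound for empirical distributions over $m$ atoms).

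The main obstacle is the concentration of $\hat{c}(q)$, because the denominator $N_c(q)$ is itself random, so one cannot directly invoke Hoeffding's inequality with a fixed sample size. The union-bound-over-$N_c(q)$ trick described above resolves this cleanly, at the cost of an extra $\log n$ factor which is exactly what appears in the stated bound. A final union bound across the three events gives the claim with overall probability at least $1-\delta$.
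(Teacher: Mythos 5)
Your proposal is correct in substance and, for two of the three events, it coincides with what the paper does by citation: the paper's own proof of this lemma is one line, invoking Weissman et al.\ for $\cE_{\text{arvl}}$ and Lemmas 5--6 of Liu et al.\ for $\cE_{\text{cost}}$ and $\cE_{\text{counter}}$, and those cited lemmas are precisely your multiplicative Chernoff bound plus union bound for $N_c(q)$ and your Hoeffding/sub-Gaussian bound with a union bound over the random sample size $N_c(q)\in[n]$ for $\hat c(q)$ (the factor $2$ inside $\log(2mn/\delta)$ coming from the two-sided bound). The one place you take a genuinely different route is $\cE_{\text{arvl}}$: the paper relies on the Weissman et al.\ $\ell_1$ inequality, obtained by writing $\norm{\hat{\bp}-\bp}_1=2\max_{A\subseteq\cQ}\bigl(\hat p(A)-p(A)\bigr)$ and union-bounding Hoeffding over the $2^m$ subsets, which reproduces the stated constant exactly since $\sqrt{2(m\log 2+\log(1/\delta))/n}\le\sqrt{2m\log(2/\delta)/n}$; you instead use McDiarmid plus $\E\norm{\hat{\bp}-\bp}_1\le\sqrt{m/n}$. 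Both yield the same $\sqrt{m/n}$ rate, but your version gives $\sqrt{m/n}+\sqrt{2\log(3/\delta)/n}$, which only dominates into the stated form $\sqrt{2m\log(2/\delta)/n}$ once $m$ exceeds a small constant, so if you want the lemma verbatim the Weissman route is the cleaner match. One further caution on $\cE_{\text{cost}}$: conditioning on ``which rounds invoked the LLM'' is only innocuous if the experimenter's invocation decisions do not depend on previously realized costs, which the offline model does not rule out; the robust version of your own device is to index the cost samples of each $q$ by arrival order and union-bound over the first $s\in[n]$ of them, which needs no such conditioning and gives the identical width.
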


Given arm-level estimation errors in \cref{lem:offline_concen}, we next proceed to relate the arm-level error to the cache-level suboptimal gap. Specifically, we bound the sensitivity of the loss function to perturbations in its parameters.

\begin{lemma}[Loss Function Sensitivity]\label{lem:offline_smooth}
For any cost parameters $\bc, \bc' \in [0,1]^m$, arrival probabilities $\bp, \bp' \in [0,1]^m$, and any distance function $d: \cQ \times \cQ \rightarrow [0,1]$, the loss function satisfies that for any cache $\cM \subseteq \cQ$, $\abs{\ell(\cM; \bc, \bp, d) - \ell(\cM; \bc', \bp', d)} \le \sum_{q \in Q}|p(q)-p'(q)| + \sum_{q \in Q} p(q) |c(q) - c(q')|$.
\end{lemma}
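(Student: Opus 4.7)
The loss splits cleanly into a probability-weighted sum, so the natural approach is a two-step telescoping that isolates the perturbation in $\bc$ from the perturbation in $\bp$. I define the per-query mismatched-minimum function
\[
f(q) \defeq \min\{c(q),\, d(q,\cM)\}, \qquad f'(q) \defeq \min\{c'(q),\, d(q,\cM)\},
\]
so that by \eqref{eq:loss} we have $\ell(\cM;\bp,\bc,d) = \sum_{q\in\cQ} p(q) f(q)$ and $\ell(\cM;\bp',\bc',d) = \sum_{q\in\cQ} p'(q) f'(q)$.

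\textbf{Key steps.} I would add and subtract the mixed term $\sum_{q} p(q) f'(q)$ to obtain the decomposition
\[
\ell(\cM;\bp,\bc,d) - \ell(\cM;\bp',\bc',d) \;=\; \underbrace{\sum_{q\in\cQ} p(q)\bigl(f(q) - f'(q)\bigr)}_{\text{(I): cost perturbation}} \;+\; \underbrace{\sum_{q\in\cQ} \bigl(p(q) - p'(q)\bigr) f'(q)}_{\text{(II): probability perturbation}}.
\]
For (I), the scalar map $x \mapsto \min\{x,y\}$ is $1$-Lipschitz, so $|f(q)-f'(q)| \le |c(q) - c'(q)|$, which after triangle inequality bounds (I) by $\sum_{q} p(q)|c(q)-c'(q)|$. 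For (II), the assumptions $\bc' \in [0,1]^m$ and $d \in [0,1]$ imply $0 \le f'(q) \le 1$, so (II) is bounded in absolute value by $\sum_{q}|p(q)-p'(q)|$. Applying the triangle inequality to the sum of (I) and (II) yields exactly the stated inequality.

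\textbf{Main obstacle.} There is no serious technical difficulty; the subtlety lies entirely in the bookkeeping of the telescoping. Specifically, one must add and subtract $p(q)f'(q)$ rather than $p'(q)f(q)$: the latter choice would produce a bound with $p'(q)$ multiplying the cost-difference, which does not match the claim. The Lipschitz property of $\min\{\cdot,y\}$ and the boundedness of $f'$ are then immediate consequences of $\bc,\bc'\in[0,1]^m$ and the hypothesis $d:\cQ\times\cQ\to[0,1]$ stated in the lemma. Note also that the claim holds without any structural assumption on $d$ (no metric axioms needed), because $d(q,\cM)$ enters as a fixed constant when comparing the two loss values on the same cache $\cM$.
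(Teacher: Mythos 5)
Your proof is correct and follows essentially the same route as the paper: both telescope through the same intermediate quantity $\ell(\cM;\bp,\bc',d)$, bounding the probability perturbation by $\sum_{q}|p(q)-p'(q)|$ using $0\le \min\{c'(q),d(q,\cM)\}\le 1$, and the cost perturbation by $\sum_q p(q)|c(q)-c'(q)|$. The only difference is cosmetic: where you invoke the $1$-Lipschitzness of $x\mapsto\min\{x,y\}$, the paper spells out the same fact via a three-case comparison of $d(q,\cM)$ against $\max\{c(q),c'(q)\}$ and $\min\{c(q),c'(q)\}$; your version is more compact but carries identical content.
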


% Equipped with the above lemmas, we are ready to prove  \cref{thm:LLM_cache_main_offline} as follows.
\begin{proof}[Proof of \cref{thm:LLM_cache_main_offline}]
We ignore $d$ in the loss function $\ell(\cM;\bp,\bc,d)=\ell(\cM;\bp,\bc)$ when contexts are clear.
Under the events of $\cE_{\text{arvl}}, \cE_{\text{cost}}, \cE_{\text{counter}}$,
$\ell(\hat{\cM};\bc, \bp) - \alpha \ell(\cM^*;\bc, \bp) 
= \underbrace{\alpha \ell(\cM^*; \bar{\bc}, \hat{\bp}) - \alpha \ell(\cM^*; \bc, \bp)}_{\text{estimation gap}} 
+ \underbrace{\ell(\hat{\cM}; \bar{\bc}, \hat{\bp}) - \alpha \ell(\cM^*;\bar{\bc}, \hat{\bp})}_{\text{greedy gap}} 
\\+ \underbrace{\ell(\hat{\cM};\bc, \bp) - \ell(\hat{\cM};\bar{\bc}, \hat{\bp})}_{\text{pessimism gap}} 
\le \alpha \ell(\cM^*; \bar{\bc}, \hat{\bp}) - \alpha \ell(\cM^*; \bc, \bp) + \ell(\hat{\cM}; \bar{\bc}, \bp) - \ell(\hat{\cM};\bar{\bc}, \hat{\bp}) 
\le \sum_{q \in \cQ} p(q) |\bar{c}(q) - c(q)| + 2\norm{\hat{\bp} - \bp}_1 
\le \sum_{q \in \cQ} p(q) |\bar{c}(q) - c(q)| + 2\sqrt{\frac{2m\log(\frac{2}{\delta})}{n}},$
% \begin{align}
%     &\ell(\hat{\cM};\bc, \bp, d) - \alpha \ell \left(\cM^*;\bc, \bp, d \right) \\
%     & \overset{(a)}{=}  \underbrace{\alpha \ell(\cM^*; \bar{\bc}, \hat{\bp}, d) - \alpha \ell \left(\cM^*; \bc, \bp, d \right)}_{\text{estimation gap}}\notag\\ 
%     &+ \underbrace{  \ell \left(\hat{\cM}; \bar{\bc}, \hat{\bp}, d \right) - \alpha  \ell \left(\cM^*;\bar{\bc}, \hat{\bp}, d \right)}_{\text{greedy gap}} \notag\\
%     &+ \underbrace{\ell \left(\hat{\cM};\bc, \bp, d \right) - \ell \left(\hat{\cM};\bar{\bc}, \hat{\bp}, d \right)}_{\text{pessimism gap}}\\
%     &\overset{(b)}{\le} \ell(\cM^*; \bar{\bc}, \hat{\bp}, d) - \ell \left(\cM^*; \bc, \bp, d\right) + \ell \left(\hat{\cM}; \bar{\bc}, \bp, d \right) \notag\\
%     &- \ell \left(\hat{\cM};\bar{\bc}, \hat{\bp}, d \right)\\
%     &\overset{(e)}{\le}  \sum_{q \in \cQ} p(q) \left| \bar{c}(q) - c(q) \right| +  2\norm{\hat{\bp} - \bp}_1\\
%     &\overset{(f)}{\le} \sum_{q \in Q} p(q) \left| \bar{c}(q) - c(q) \right| + 2\sqrt{\frac{2m\log(\frac{2}{\delta})}{n}}, \label{apdx_eq:LLM_reduce_to_prev}
% \end{align}
where the equality is due to adding and subtracting $\alpha \ell(\cM^*; \bar{\bc}, \hat{\bp})$ and $\ell (\hat{\cM};\bar{\bc}, \hat{\bp})$, the first inequality is due to the greedy gap $\le 0$ by the approximation ratio guarantee of the $\texttt{Reverse Greedy}$, $\bar{\bc} \ge \bc$ by \cref{lem:offline_concen}, and $\ell(\cM;\bp,\bc)$ is monotone regarding $\bc$, the second inequality is due to \cref{lem:offline_smooth}, and the last inequality is due to \cref{lem:offline_concen}.

Then we have for the first term: 
$\sum_{q \in \cQ} p(q) \left| \bar{c}(q) - c(q) \right| 
\le 2\sum_{q \in \cQ} p(q) \sqrt{ \frac {\log (2mn/\delta) } {2 N_c(q)} } 
\le 2\sum_{q \in \cQ} p(q) \sqrt{ \frac {\log (2mn/\delta) } {n \cdot p(q) \nu(q)} } 
\le 2 \sqrt{\sum_{q \in \cQ} p(q)} \sqrt{\sum_{q \in \cQ} \frac {p(q) \log (2mn/\delta) } {n \cdot p(q) \nu(q)} } = 2 \sqrt{ \frac {\sum_{q \in \cQ} \frac{1}{\nu(q)} \log (2mn/\delta) } {n} }$,
% \begin{align}
%      &\sum_{q \in Q} p(q) \left| \bar{c}(q) - c(q) \right| \le 2\sum_{q \in \cQ} p(q) \sqrt{ \frac {\log (\frac{2mn}{\delta}) } {2 N_c(q)} } \\
%      &\le 2\sum_{q \in \cQ} p(q) \sqrt{ \frac {\log (\frac{2mn}{\delta}) } {n \cdot p(q) \nu(q)} }\\
%      &\le 2 \sqrt{\sum_{q \in \cQ} p(q)} \sqrt{\sum_{q \in \cQ} \frac {p(q) \log (\frac{2mn}{\delta}) } {n \cdot p(q) \nu(q)} }\\
%      &= 2 \sqrt{ \frac {\sum_{q \in \cQ} \frac{1}{\nu(q)} \log (\frac{2mn}{\delta}) } {n} }
% \end{align}
where the first two inequalities are due to the definition of high-probability events in \cref{lem:offline_concen}, the third inequality is due to the Cauchy-Schwarz inequality. Finally, we can set $\delta'=1/(3\delta)$ to conclude the theorem.
\end{proof}
% \carlee{maybe add a transition sentence here, e.g., ``Now that we have developed a caching policy for the offline setting, we move to the online setting, in which we collect user data while learning the optimal cache eviction policy, and which does not require an offline dataset.}

% Now that we have developed a caching policy for the offline setting, we move to the online setting, in which we collect user data while learning the optimal cache eviction policy, and which does not require an offline dataset.

    \section{Online Adaptive Learning for the Semantic Caching Problem with Unknown Parameters}\label{sec:online}
Now that we have developed a caching policy for the offline setting, we move to the online setting, in which we collect user data while learning the optimal cache eviction policy, and which does not require an offline dataset.
For example, a deployed chatbot must decide whether to serve each incoming query from the cache or invoke the LLM at a cost for a stream of users, while continuously estimating query distributions and serving costs. 
% \carlee{give a specific example, e.g., users asking questions of an already deployed chatbot} 
We first introduce the online learning setup, then present the proposed algorithm \AlgTwo~in \Cref{alg:CLCB-LLM-switch}, and finally provide theoretical guarantees and analysis.

\subsection{Online Learning Setup}
We consider a $T$-round sequential decision-making problem. At the beginning of each round $t \in [T]$, the system maintains a semantic cache $\cM_t$ of size $k$.

A user with query $q_t \sim \bp$ arrives. As described in \Cref{sec:SC_serving}, the agent chooses between two actions:
\begin{itemize}
    \item  Serve $q_t$ using the cached response of its nearest neighbor $s(q_t, \cM_t)$, incurring a mismatch cost $d(q_t, \cM_t)$ but receiving no feedback on the true serving cost $c(q_t)$;
    \item  Query the LLM to obtain $a(q_t)$, incurring a realized cost $C_t(q_t)$ and observing this cost.
\end{itemize}

At the end of round $t$, the agent decides whether to update the cache. If $\cM_{t} \ne \cM_{t-1}$, the agent must query the LLM to fill the responses for the updated queries in $\cM_{t} \setminus \cM_{t-1}$. 
% \siwei{Why using symmetric difference but not just $\cM_{t} \setminus \cM_{t-1}$? Also, below there are both $\cM_{t} \setminus \cM_{t-1}$ and $\cM_{t} \oplus \cM_{t-1}$ in the proof, please check them.} 
This models a \emph{switching cost} incurred for updating the cache. 
% \carlee{do we assume unit switching costs? We should justify this by saying we normalize all costs accordingly}

\textbf{Learning Objective.} The objective is to minimize the cumulative regret relative to the best fixed $\alpha$-approximate cache in hindsight, while accounting for both serving and switching costs. Formally, letting $\cM_0=\emptyset$, the regret with switching cost is defined as:
\begin{align}\label{eq:reg_def}\textstyle
    \text{Reg}(T) &= \E \left[ \sum_{t \in [T]} \ell(\cM_t; \bp,\bc,d) +  \sum_{q \in \cM_{t} \setminus \cM_{t-1}} c(q) \right] \notag\\
    &- \sum_{t\in [T]} \alpha 
    \ell(\cM^*;\bp,\bc,d)
\end{align}
% \xiangxiang{Is it necessary to clarify the source of the initial cache $\cM_0$?, like 'where $\cM_0 = \emptyset$ is the initial cache,'}
where the expectation is taken on the randomness of the query arrival, the serving cost, and the cache selection.

As in standard online learning, minimizing regret requires balancing \emph{exploration}—to acquire feedback and refine estimates of the unknown serving costs—and \emph{exploitation}—to utilize the best-known cache to minimize loss \cite{dai2025variance}. This exploration-exploitation tradeoff is especially challenging in our setting because the agent only observes the serving cost when querying the LLM for a fresh response. Such queries can be costly, making effective exploration both essential and expensive.

% \carlee{add a quick discussion of challenges--e.g., we can just say briefly that ``As in standard online learning algorithms, minimizing the regret requires both exploration (to obtain more feedback on the unknown serving costs and update their estimates) and exploitation (employing the best identified cache). Managing this tradeoff is particularly difficult in our setting as we do not receive any feedback on the serving cost unless the LLM is queried for a fresh answer, which in turn can incur high serving costs.''}

\subsection{Algorithm Design of \AlgTwo}
\begin{algorithm}[!t]
    \caption{\texttt{CLCB-SC-LS}: Combinatorial Lower Confidence Bound Algorithm for Semantic Caching with Low Switching}
    \label{alg:CLCB-LLM-switch}
    \begin{algorithmic}[1]
        \State \textbf{Input:} Queries $\cQ$, solver \texttt{Reverse\_Greedy}, embedding generator $e$, distance function $d$, rounds $T$, probability $\delta$.
        \State \textbf{Initialize:} For each query $q \in \cQ$, set counter $N_{p,0}(q)=0, N_{c,0}(q)=0$, cumulative serving cost $L_{c,0}(q)=0$, stage index $\tau_q=1$, and round index sets $\cT(q, \tau_q)=\emptyset$. And initialize $\tau_p=1$, $\cT(p, \tau_p)=\emptyset$. 
        % \For{query pairs $(q_1, q_2) \in \cQ \times \cQ$}
        % \State Compute distance $d(q_1, q_2)=\norm{ e(q_1) - e(q_2) }_{2}$.
        % \EndFor
        \For{round $t=1, ..., T$}
        \State User $t$ arrives with query $q_t \sim \bp$.
        \If{ $\exists \, q \in [m]$ s.t. $|\cT(q,\tau_q)| \ge 1+\sqrt{\frac{T \cdot \sum_{\tau=1}^{\tau_q-1} |\cT(q,\tau)|}{m}}$}\label{line:switch_q_start}
        \State Set $\tau_q = \tau_q$ + 1, $\cT(q, \tau_q)=\emptyset$.\label{line:switch_q_end}
        \State Set \texttt{Switch} = True.
        \ElsIf{$|\cT(p, \tau_p)| \ge 1 + \sqrt{T \cdot \sum_{\tau=1}^{\tau_p - 1} |\cT(p, \tau_p)|}$}\label{line:switch_p_start}
        \State Set $\tau_p = \tau_p+1$, $\cT(p, \tau_p)=0$.
        \State Set \texttt{Switch} = True.\label{line:switch_end}
        \EndIf
        \If{\texttt{Switch} == True}
        \State Call \texttt{Switch Cache} and get $\cM_t$.
        \State Set \texttt{Switch} = False
        \Else
        \State Remain $\cM_t = \cM_{t-1}$.
        \EndIf
        \State Call \texttt{Serve and Update}
        \EndFor
	\end{algorithmic}
\end{algorithm}

In this section, we present \AlgTwo, a low-switching online learning algorithm for semantic caching
% , denoted as \texttt{CLCB-SC-LS} (Combinatorial Lower Confidence Bound Algorithm for the Semantic Caching Problem with Low Switching). 
The pseudocode is provided in \Cref{alg:CLCB-LLM-switch}, with subroutines \texttt{Switch Cache} in \Cref{alg:switch} and \texttt{Serve and Update} in \Cref{alg:serve_n_update}.
% \xiangxiang{whether need to briefly describe the role of each subroutine? like 'for cache update decisions, for query serving and parameter estimation'}

The algorithm proceeds in three key phases: (i) determining when to switch the cache based on the number of observations, (ii) actively interacting with the environment to explore the unknown serving costs and arrival probabilities, and (iii) serving user queries while collecting feedback to refine parameter estimates over time.

\noindent\textbf{Key Differences from Offline Learning.}
Compared to the offline algorithm (\AlgOne), which optimizes a single cache using a static dataset, the online setting requires the agent to explore different caches over time to learn the unknown serving costs. However, changing the cache too frequently incurs switching overhead. Thus, \AlgTwo~must both: (i) ensure sufficient exploration to estimate serving costs accurately, and (ii) limit the number of switches to control switching costs.

% To this end, we introduce a stage-based update mechanism for the serving cost and for the arrival probability. 
% \xiangxiang{do we need this? seems repeated with below}

\noindent\textbf{Stage-Based Cache Switching (Lines \ref{line:switch_q_start}-\ref{line:switch_end} of \cref{alg:CLCB-LLM-switch}).}
To control switching frequency while ensuring sufficient exploration, we introduce a stage-based switch mechanism for both the serving cost estimates and the arrival probabilities.

For each query $q \in \cQ$, we partition the time slots into stages, where a new stage begins once sufficient serving cost feedback has been collected. Let $\cT(q, \tau)$ denote the set of rounds in which query $q$ was submitted to the LLM to get a fresh response during stage $\tau$. Let $\tau_q(t)$ denote the current stage index of $q$ at round $t$. As shown in \cref{line:switch_q_start}, stage $\tau_q$ is incremented if:
$
|\cT(q, \tau_q)| \ge 1 + \sqrt{ \frac{T \cdot \sum_{\tau=1}^{\tau_q-1} |\cT(q,\tau)| }{m} },
$
i.e., when the number of observations in the current stage exceeds a threshold (whose value is tuned to bound the number of switches as in \cref{lem:online_n_switch}) based on the cumulative prior observations.
% \xiangxiang{Readers may find it difficult to understand its meaning or why this threshold was chosen? 'a dynamic threshold that balances exploration (based on total rounds $T$ and prior observations)'? or supported by Lemma X introduced later on}

Similarly, we maintain a global stage index $\tau_p$ for arrival probability estimation. This stage is incremented once the number of user arrivals in the current stage exceeds a corresponding threshold. As shown in \Cref{line:switch_p_start}, stage $\tau_p$ is updated when:
$
|\cT(p, \tau_p)| \ge 1 + \sqrt{T \cdot \sum_{\tau=1}^{\tau_p - 1} |\cT(p, \tau_p)|}.
$

Whenever a new stage is triggered for query $q$ or the arrival probability, the flag $\texttt{Switch}$ is changed to True and signals a new cache update in \cref{alg:switch}.
% \xiangxiang{Perhaps explain the role of the $\texttt{Switch}$ flag? 'the $\texttt{Switch}$ flag is set to true to signal a potential cache update in \cref{alg:switch}.'}

\noindent\textbf{Active Exploration via Optimism Principle (\cref{alg:switch}).}
% \xiangxiang{Do we need to add a transitional sentence to connect the flag and exploration? like, 'To determine which queries to prioritize during these updates,'}
To efficiently explore unknown serving costs, we follow the principle of optimism in the face of uncertainty \cite{dai2025unified,liu2023contextual}. Specifically,
% \xiangxiang{whether need to add 'in the offline setting' and '\textit{lower confidence bounds} (LCBs) in the online setting' to emphasize the different setting}
we compute \textit{lower confidence bounds} (LCBs) as optimistic estimates in \cref{line:online_LLM_LCB} of \cref{alg:switch}:
$\ubar{c}_t(q) = \hat{c}_t(q) - \sqrt{ \frac{2 \log( \frac{4mn}{\delta} ) }{N_{c,t}(q)} }.$
The intuition is that for queries with limited feedback, subtracting the confidence radius from the empirical mean creates an optimistic estimate that encourages the learner to favor these uncertain queries, thus collecting more feedback on their unknown costs.

Once the \texttt{Switch} flag is true, we invoke \texttt{Switch Cache}, which uses the \texttt{Reverse\_Greedy} oracle with the estimated probabilities $\hat{\bp}_t$ and optimistic costs $\ubar{\bc}_t$ to construct a new cache $\cM_t$. The agent then calls the LLM to populate all new queries in $\cM_t \setminus \cM_{t-1}$. After updating the cache, the agent proceeds to serve queries and refine estimates using \texttt{Serve and Update} in the following.
% \xiangxiang{do we need to introduce 'Serve and Update' here? 'After updating the cache, the agent proceeds to serve queries and refine estimates using \texttt{Serve and Update}'}

\noindent\textbf{Serving and Updating (\Cref{alg:serve_n_update}).}
After determining the cache $\cM_t$ for the current round, the agent compares the lower confidence bound (LCB) estimate $\ubar{c}_t(q_t)$ with the mismatch cost incurred by using the nearest cached query $s(q_t, \cM_t)$ (see \Cref{line:online_compare}).
The \texttt{LLM} is queried only when $\ubar{c}_t(q_t) < d(q_t, \cM_t)$, as shown in \Cref{line:online_feedback} of \Cref{alg:serve_n_update}. The use of optimistic LCBs—where $\ubar{c}_t(q_t) \le c(q_t)$ with high probability—promotes exploration by increasing the likelihood of collecting real cost feedback. This mechanism is crucial for enabling the agent to gather informative data in the early rounds and for accelerating the learning of the unknown parameters.
% \xiangxiang{accelerating convergence does not seem to specify to converge to what, perhpas 'for accelerating convergence to accurate cost estimates.'}

If $\ubar{c}_t(q_t)$ is smaller than $d(q_t, \cM_t)$, the agent queries the \texttt{LLM}, incurs the realized serving cost $C_t(q_t)$, and uses the feedback to update its estimate of the expected cost $c(q_t)$. Otherwise, it returns the cached response $a(s(q_t, \cM_t))$ without receiving any feedback. Finally, the agent updates the arrival probability statistics based on the observed query $q_t$ (see \Cref{line:online_update_p}).
\begin{algorithm}[!t]
    \caption{\texttt{Switch Cache}}
    \label{alg:switch}
    \begin{algorithmic}[1]
    % \State \textbf{Input: $t, N_{p,t}(q), N_{c,t}(q), L_{c,t}(q)$}
        	\For{query $q \in \cQ$}
            % \State Calculate counters $N(q)=\sum_{t=1}^n \I\{q = q_t\}$ and $N_c(q)=\sum_{t=1}^n \I\{q= q_t \text{ and } C_t \neq \emptyset\}$.\label{line:LLM_counter}
            \State Calculate empirical means $\hat{p}_t(q)=N_{p,t}(q)/t$ and $\,\hat{c}_t(q){=} L_{c,t}(q)/N_{c,t}(q)$.\label{line:online_LLM_empirical_mean}
            \State Calculate LCB of the cost $\ubar{c}_t(q)=\hat{c}_t(q)-\sqrt{ \frac{\log ({4mT^3})} {2N_{c,t}(q)} }$.\label{line:online_LLM_LCB}
        \EndFor 
        \State Compute $\cM_t= \texttt{Reverse\_Greedy} \left(\cQ, \hat{\bp}_t, \ubar{\bc}_t, d, k \right).$\label{line:online_LLM_oracle}
        \State Call \texttt{LLM} to obtain responses $a(q)$ for all $q \in \cM_t \,\backslash\, \cM_{t-1}$.
        \State \textbf{Return:} $\cM_t$.
	\end{algorithmic}
\end{algorithm}

\begin{algorithm}[!t]
    \caption{\texttt{Serve and Update}}
    \label{alg:serve_n_update}
    \begin{algorithmic}[1]
        \State $N_{p,t+1}(q)=N_{p,t}(q), N_{c,t+1}(q)=N_{c,t}(q), L_{c,t+1}(q)=L_{c,t}(q)$ for all $q \in \cQ$.
        \If{$\ubar{c}_t(q_t)<d(q_t, \cM_t)$}\label{line:online_compare}
        \State Call $a_t=\texttt{LLM}(q_t)$ with cost $C_t(q_t)\sim c(q_t)$, and serve the user with response $a_t$.\label{line:online_feedback}
        \State Update $N_{c,t+1}(q_t)=N_{c,t}(q_t) + 1$ and $L_{t+1}(q_t)=L_{t}(q_t) + C_t(q_t)$.
        \State Update $\cT(i, \tau_q) = \cT(i, \tau_q) \cup \{t\}$.
        \Else
        \State Serve the user with $a_t = s(q_t, \cM_t)$.
        \EndIf
        \State Set $N_{p,t+1}(q_t)=N_{p,t}(q_t) + 1$, $\cT(p, \tau_p) = \cT(p, \tau_p) \cup \{t\}$.\label{line:online_update_p}
	\end{algorithmic}
\end{algorithm}

\subsection{Theoretical Result and Its Discussion}
In this section, we present the main theoretical result characterizing the performance of \AlgTwo.

\begin{theorem}[Regret of \AlgTwo]\label{thm:online}
For the online semantic caching problem, the regret of \AlgTwo~is upper bounded as:
$\text{Reg}(T) \le O\left( \sqrt{m T\log ( mT  ) }\log\log T\right).$
\end{theorem}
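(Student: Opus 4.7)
The plan is to decompose the regret in \eqref{eq:reg_def} into three parts: an \emph{estimation error} coming from using $\hat{\bp}_t$ and $\ubar{\bc}_t$ in place of the true parameters, an \emph{approximation gap} absorbed into the $\alpha$-factor by the reverse-greedy guarantee, and a \emph{switching cost} incurred whenever $\cM_t \neq \cM_{t-1}$. The main tools will be the optimism of the LCB estimates, the monotonicity and sensitivity of $\ell$ from \cref{lem:supermodular} and \cref{lem:offline_smooth}, and the stage schedule that controls the number of cache updates.

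First I would introduce high-probability concentration events analogous to \cref{lem:offline_concen}: uniformly over $t \le T$ and $q \in \cQ$, $|\hat{c}_t(q) - c(q)| \le \sqrt{\log(4mT^3)/(2N_{c,t}(q))}$, and $\sum_q |\hat{p}_t(q) - p(q)| \le O(\sqrt{m\log(T)/t})$. A union bound keeps the total failure probability at $O(1/T)$, which contributes only a constant to the cumulative regret. Under these events $\ubar{c}_t(q) \le c(q)$ (optimism); combined with $\ell$ being monotone in $\bc$ and the $\alpha$-approximation of \texttt{Reverse\_Greedy}, this yields $\ell(\cM_t; \hat{\bp}_t, \ubar{\bc}_t) \le \alpha\,\ell(\cM^*; \bp, \bc)$. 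Two applications of \cref{lem:offline_smooth}, paralleling the estimation-gap/pessimism-gap telescoping in the proof of \cref{thm:LLM_cache_main_offline}, then give the per-round bound $\ell(\cM_t;\bp,\bc) - \alpha\,\ell(\cM^*;\bp,\bc) \le \sum_q p(q)|\ubar{c}_t(q) - c(q)| + 2\|\hat{\bp}_t - \bp\|_1$.

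The main technical step is summing these per-round contributions using the stage structure. Because the cache is frozen within a stage, $N_{c,t}(q)$ is constant at $\sum_{\tau < \tau_q(t)} |\cT(q,\tau)|$ throughout stage $\tau_q(t)$. Summing $\sum_t p(q)/\sqrt{N_{c,t}(q)}$ across stages and then across queries via Cauchy--Schwarz yields a cost-estimation regret of $O(\sqrt{mT\log(mT)})$, and the probability-estimation term sums analogously to $O(\sqrt{mT\log T})$. For the switching cost, the square-root-doubling thresholds $1 + \sqrt{T\sum_{\tau'<\tau_q}|\cT(q,\tau')|/m}$ and $1 + \sqrt{T\sum_{\tau'<\tau_p}|\cT(p,\tau')|}$ force stage lengths to grow double-exponentially, so a standard recurrence argument shows each of the $m+1$ stage counters advances at most $O(\log\log T)$ times, giving at most $O(m\log\log T)$ total switches. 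Each switch fills at most $k \le m$ cache slots of unit cost, so the switching regret is $O(m \log\log T)$, which is dominated by the $\sqrt{mT\log(mT)}\log\log T$ term in the final bound.

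The main obstacle will be the stage-based summation: the thresholds must be tight enough to guarantee $\sum_{t \le T} 1/\sqrt{N_{c,t}(q)} = O(\sqrt{T/m})$ per query despite the cache being frozen within each stage, yet loose enough that only $O(\log\log T)$ stages occur so the switching regret stays lower-order. Verifying that the specific thresholds in \cref{alg:CLCB-LLM-switch} achieve both simultaneously, and handling the coupled interplay between the per-query stages $\tau_q$ (which trigger only when $q$ is actually queried to the LLM) and the global arrival stage $\tau_p$ (which advances every round), is the crux of the argument.
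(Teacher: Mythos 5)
Your high-level decomposition (estimation gap, greedy gap absorbed into $\alpha$, switching cost) and the $O(\log\log T)$ stage-counting argument match the paper's proof. But there is a genuine gap at the step you yourself identify as the crux, and your proposal does not resolve it. You bound the per-round cost-estimation error by $\sum_{q}p(q)\,|\ubar{c}_t(q)-c(q)|$ over \emph{all} $q\in\cQ$ (i.e., you use only \cref{lem:offline_smooth}), and then claim that $\sum_{t\le T} p(q)/\sqrt{N_{c,t}(q)}$ telescopes across stages to $O(\sqrt{T/m})$ per query. This fails: the counter $N_{c,t}(q)$ grows only when the LLM is actually invoked for $q$, which happens only on rounds where $\ubar{c}_t(q_t)<d(q_t,\cM_t)$. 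For a query $q$ that is well covered by the current cache, no cost feedback is ever collected, $N_{c,t}(q)$ is frozen, its stage never ends, and $\sum_{t\le T}p(q)/\sqrt{N_{c,t}(q)}$ is linear in $T$. The stage thresholds control the number of \emph{observations} $|\cT(q,\tau)|$ per stage, not the wall-clock length of a stage, so your Cauchy--Schwarz summation has no handle on the rounds during which $q$ is unobserved.

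The paper closes exactly this hole with two ingredients you are missing. First, \cref{lem:online_smooth} (the one-sided sensitivity bound for optimistic costs) restricts the cost-estimation error to the set $\{q:\ubar{c}_t(q)\le d(q,\cM_t)\}$ --- precisely the queries for which the contribution is nonzero \emph{and} for which feedback is being collected; for all other $q$ both $\min\{c(q),d(q,\cM_t)\}$ and $\min\{\ubar{c}_t(q),d(q,\cM_t)\}$ equal $d(q,\cM_t)$ and contribute nothing. Second, the triggering-probability equivalence converts $\sum_{q:\,\ubar{c}_t(q)\le d(q,\cM_t)}p(q)\,x_q$ into an expectation over the query actually sent to the LLM at round $t$, so that the double sum over rounds and queries reindexes exactly as $\sum_q\sum_\tau\sum_{t\in\cT(q,\tau)}$, with $|\cT(q,\tau)|$ terms per stage --- which is what the stage thresholds bound. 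Without these two steps the estimation term does not reduce to $O(\sqrt{mT\log(mT)}\log\log T)$. (A minor additional slip: each switch can populate up to $k\le m$ new entries, so with $O(m\log\log T)$ switches the switching cost is $O(m^2\log\log T)$, not $O(m\log\log T)$; this is still lower order and harmless.)
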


\begin{remark}[Discussion of \Cref{thm:online}]
\Cref{thm:online} shows that the regret of \AlgTwo~grows sublinearly with the number of rounds $T$. Notably, the number of cache switches is bounded by $O(\log \log T)$, as established in \Cref{lem:online_n_switch}.
In the special case of exact matching (i.e., where semantic distance reduces to exact query identity), our result improves upon the regret bound in Zhu et al. \cite{zhu2023optimal} by a factor of $O(k\sqrt{m}/C_1)$, where $C_1 = \min_{q \in \cQ} c(q)$ denotes the minimum query cost. Furthermore, our bound matches the state-of-the-art result in Liu et al. \cite{liu2025offline} up to an $O(\log \log T)$ multiplicative factor, thanks to our stage-based cache update design. In contrast, both of these prior algorithms incur linear $O(T)$ switching costs.
\end{remark}

\subsection{Regret Analysis}
% \carlee{We can get rid of some equations in this subsection and instead talk about how the feedback structure affects the regret analysis}
To prove \Cref{thm:online}, we establish a sequence of lemmas.
% \subsection{Regret Anaysis}
% \textbf{Regret Analysis.} To prove the upper bound for the regret, we first introduce a series of lemmas as follows. Similar to the offline learning setting, the first lemma gives concentration bounds for the estimation of the costs and the arrival probabilities. But different from \cref{lem:offline_concen}, we prove an any time concentration for $t \in [T]$ hold with an additional $T$ factor within the logarithmic term due to the use of union bounds over $t$.
We first show that the empirical estimates of the arrival probabilities and serving costs concentrate around their true values uniformly over time. Compared to the offline setting, this requires an \emph{any-time} guarantee, introducing an additional $T$ term within the logarithmic terms due to union bounding over $t \in [T]$.

\begin{lemma}[Any-time concentration of cost and probability estimates]\label{lem:online_concen}
    For any $\delta>0$, we define the concentration events as follows:
%     \begin{align}
%     \cE_{\text{arvl}} &\defeq \left\{ \sum_{q \in \cQ}|\hat{p}_t(q) - p(q)| \le \sqrt{\frac{2m \log (\frac{2T^2}{\delta})} { t } }  \text{ for } t \in [T]\right\}\\
%     \cE_{\text{cost}} &\defeq \left\{ \abs{\hat{c}_t(q) - c_t(q)}\le \sqrt{\frac{\log ( \frac{ mT^2 } { \delta } ) }{2N_{c,t}(q)} } \text{ for } q \in \cQ, t \in [T] \right\}
% \end{align} 
$\cE_{\text{arvl}} \defeq \Big\{ \sum_{q \in \cQ}|\hat{p}_t(q) - p(q)| \le \sqrt{\frac{2m \log (2T/\delta)}{t}} \text{ for } t \in [T] \Big\}, \quad
\cE_{\text{cost}} \defeq \Big\{ |\hat{c}_t(q) - c_t(q)| \le \sqrt{\frac{\log (2mT^2/\delta)}{2N_{c,t}(q)}} \text{ for } q \in \cQ, t \in [T] \Big\}.$
Then both events hold with high probability: $\Pr[\cE_{\text{arvl}}] \ge 1 - \delta$ and $\Pr[\cE_{\text{cost}}] \ge 1 - \delta$.
\end{lemma}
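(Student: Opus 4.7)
The plan is to establish the two concentration events separately, leveraging standard multinomial/sub-Gaussian bounds combined with careful union arguments to upgrade each fixed-time bound to the any-time guarantee required in the lemma.

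For $\cE_{\text{arvl}}$, the queries $q_1, \ldots, q_t$ are i.i.d.\ draws from $\bp$ and $\hat{\bp}_t$ is the corresponding multinomial empirical distribution. I would invoke Weissman's inequality (equivalently, Bretagnolle--Huber--Carol) for the $L_1$-deviation of an empirical multinomial: for any $\epsilon > 0$, $\Pr[\sum_{q \in \cQ}|\hat{p}_t(q) - p(q)| \ge \epsilon] \le 2^m \exp(-t\epsilon^2/2)$. Setting $\epsilon_t = \sqrt{2m\log(2T/\delta)/t}$ makes the per-round failure probability at most $(\delta/T)^m \le \delta/T$, so a union bound over $t \in [T]$ yields $\Pr[\cE_{\text{arvl}}] \ge 1 - \delta$.

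For $\cE_{\text{cost}}$, the main subtlety is that the number of samples $N_{c,t}(q)$ is a data-dependent random count adapted to the filtration generated by the algorithm's past decisions. My plan is to pass to a fixed-sample-size view: for each query $q$, let $C^{(1)}(q), C^{(2)}(q), \ldots$ denote the i.i.d.\ cost observations collected whenever the \texttt{LLM} is queried on $q$, and let $\hat{c}^{(n)}(q) = \frac{1}{n}\sum_{j=1}^{n} C^{(j)}(q)$. Because each $C^{(j)}(q) = c(q) + \epsilon^{(j)}(q)$ with independent zero-mean sub-Gaussian noise, a Hoeffding-type tail bound gives $\Pr[|\hat{c}^{(n)}(q) - c(q)| \ge \sqrt{\log(2mT^2/\delta)/(2n)}] \le \delta/(mT^2)$ for every fixed pair $(q,n)$. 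A union bound over $q \in \cQ$ and $n \in \{1,\ldots,T\}$ makes the total failure probability at most $\delta/T \le \delta$, and since $N_{c,t}(q) \in \{1,\ldots,T\}$ at every round $t \in [T]$, substituting $n = N_{c,t}(q)$ yields the claimed uniform control of $\hat{c}_t(q) = \hat{c}^{(N_{c,t}(q))}(q)$.

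The main obstacle is precisely the random stopping time $N_{c,t}(q)$: a naive Hoeffding bound does not apply directly when the number of samples is chosen by the algorithm based on past observations. My resolution is the deterministic union bound over all possible values of $n \in \{1,\ldots,T\}$, which decouples the concentration from the algorithm's selection rule. A more elegant alternative would be a time-uniform self-normalized martingale inequality (e.g., in the style of Abbasi-Yadkori et al.), which would match the same $\sqrt{\log(\cdot)/N_{c,t}(q)}$ rate; I prefer the elementary union bound since it already recovers the scaling stated in the lemma and keeps the argument self-contained.
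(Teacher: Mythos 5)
Your proposal is correct and follows essentially the same route as the paper: Weissman's $L_1$ concentration for the empirical arrival distribution with a union bound over $t \in [T]$, and a Hoeffding/sub-Gaussian bound for the cost estimates made time-uniform by union bounding over $q \in \cQ$ and all possible counter values $N_{c,t}(q) \in [T]$ (the paper delegates these steps to its offline concentration lemma and the corresponding lemmas of Liu et al., adding exactly the union bounds you describe). Your explicit treatment of the random sample count via the fixed-$n$ reindexing is the same standard device the cited lemmas rely on, so there is no substantive difference.
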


To handle the partial feedback inherent in the caching system—where no serving cost is observed when the cache is used—we require a stronger loss function sensitivity result than \cref{lem:offline_smooth}, specifically tailored for optimistic cost estimates based on lower confidence bounds.

\begin{lemma}[Sensitivity of Loss under Optimistic Cost Estimates.]\label{lem:online_smooth}
For any cost parameters $ \ubar{\bc}, \bc \in [0,1]^m$ such that $\ubar{c}(q) \le c(q)$ for all $q \in \cQ$, the loss function satisfies that for any cache $\cM \subseteq \cQ$,
    $\ell(\cM; \bc, \bp, d) - \ell(\cM; \ubar{\bc}, \bp, d) \le \sum_{q\in \cQ:\, \ubar{c}(q) \le d(q, \cM)} p(q) (c(q) - \ubar{c}(q))$.
\end{lemma}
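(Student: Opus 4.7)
The plan is to reduce the bound to a pointwise statement about the integrand of the loss in \eqref{eq:loss} and then sum over queries weighted by $p(q)$. Recall that
\[
\ell(\cM;\bp,\bc,d) - \ell(\cM;\bp,\ubar{\bc},d) = \sum_{q\in\cQ} p(q)\bigl(\min\{c(q),d(q,\cM)\} - \min\{\ubar{c}(q),d(q,\cM)\}\bigr),
\]
so it suffices to show, for every fixed $q$, that
\[
\min\{c(q),d(q,\cM)\} - \min\{\ubar{c}(q),d(q,\cM)\} \le (c(q)-\ubar{c}(q))\,\bOne\{\ubar{c}(q)\le d(q,\cM)\}.
\]

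The main step is a short case analysis driven by where $\ubar{c}(q)$ sits relative to $d(q,\cM)$. First, if $\ubar{c}(q)>d(q,\cM)$, then since $c(q)\ge \ubar{c}(q)$ we also have $c(q)>d(q,\cM)$, so both minima equal $d(q,\cM)$ and the pointwise difference is $0$, matching the RHS (whose indicator is also zero). Second, if $\ubar{c}(q)\le d(q,\cM)$, then $\min\{\ubar{c}(q),d(q,\cM)\}=\ubar{c}(q)$, while $\min\{c(q),d(q,\cM)\}\le c(q)$ (and also $\le d(q,\cM)$); in either sub-case ($c(q)\le d(q,\cM)$ or $c(q)>d(q,\cM)$) the difference is at most $c(q)-\ubar{c}(q)$, again agreeing with the RHS.

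Combining the two cases gives the pointwise inequality, and multiplying by $p(q)\ge 0$ and summing over $q\in\cQ$ yields exactly the claimed bound, restricting the sum to those $q$ with $\ubar{c}(q)\le d(q,\cM)$. There is no real obstacle here: the lemma is essentially the $1$-Lipschitz and monotonicity property of $x\mapsto \min\{x,d(q,\cM)\}$, together with the observation that this map is \emph{constant} on $(d(q,\cM),\infty)$, which is precisely what eliminates the contribution of queries outside the set $\{q:\ubar{c}(q)\le d(q,\cM)\}$. The only thing to be careful about is not loosening the bound to all $q$: the key improvement over \Cref{lem:offline_smooth} comes from exploiting that when $\ubar{c}(q)>d(q,\cM)$ the cache is already preferred under both cost vectors, so the LCB perturbation does not affect the loss at $q$.
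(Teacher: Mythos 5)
Your proof is correct and follows essentially the same route as the paper: the paper's argument (via \cref{lem:offline_smooth} with $\bar{\bc}=\bc$, $\ubar{\bc}$ as given) partitions queries into the sets $A$, $B$, $C$ according to where $d(q,\cM)$ falls relative to $\ubar{c}(q)$ and $c(q)$, which is exactly your pointwise case analysis with $A\cup B=\{q:\ubar{c}(q)\le d(q,\cM)\}$ and $C$ contributing zero. No gaps.
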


We also upper bound the number of cache switches for each query $q \in \cQ$ and the arrival probability.
\begin{lemma}[Number of cache switches]\label{lem:online_n_switch}
    We can upper bound the number of stages $\tau_q(T)$ and $\tau_p(T)$ by:
        $\E\left[ \sum_{q \in \cQ} \tau_q(T) \right] \le O(m \log \log T),
        \E[\tau_p(T)] \le O(\log \log T). $
\end{lemma}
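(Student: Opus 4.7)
The plan is to treat both bounds as \emph{deterministic} consequences of the stage-advancement rule, so taking expectations is immediate. First I would analyze the per-query stage recursion, then combine across queries via Cauchy--Schwarz, and handle the global probability counter by a similar but cleaner recursion.

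For $\tau_p(T)$: since every round adds exactly one element to $\cT(p,\cdot)$, the cumulative count $S_\tau := \sum_{s \le \tau} |\cT(p,s)|$ satisfies $S_{\tau_p(T)} = T$, and the advancement rule of \Cref{alg:CLCB-LLM-switch} forces $S_\tau \ge S_{\tau-1} + \sqrt{T S_{\tau-1}}$. Writing $S_\tau = T^{\alpha_\tau}$ yields the super-geometric recursion $1 - \alpha_\tau \le 2^{-(\tau-1)}(1-\alpha_1)$, from which $\tau_p(T) = O(\log \log T)$ follows as soon as $\alpha_\tau \ge 1 - \log 2/\log T$ (a single further stage then covers the remaining mass).

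For $\tau_q(T)$: the analogous per-query rule gives the weaker step $S_{q,\tau} \ge S_{q,\tau-1} + \sqrt{T S_{q,\tau-1}/m}$, so I would decompose the evolution around the threshold $L := T/m$. In the \emph{fast phase} ($S_{q,\tau} \le L$), setting $R_\tau := S_{q,\tau}/L$ gives $R_\tau \ge \sqrt{R_{\tau-1}}$, and iterating from $R_1 = 1/L$ shows the phase exits in $O(\log\log T)$ stages. In the \emph{slow phase} ($S_{q,\tau} > L$), writing $S_{q,\tau} = L\cdot f_\tau$ reduces the recursion to $\sqrt{f_\tau} \ge \sqrt{f_{\tau-1}} + 1/2$, so reaching $N_q$ LLM calls for query $q$ uses at most $O(\sqrt{mN_q/T})$ additional stages. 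Summing the per-query bound $\tau_q(T) \le O(\log \log T + \sqrt{mN_q/T})$ and using $\sum_q N_q \le T$ (at most one LLM call per round) together with Cauchy--Schwarz yields $\sum_q \tau_q(T) \le O(m\log\log T) + O(\sqrt{m/T})\cdot\sqrt{mT} = O(m\log\log T)$.

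The main obstacle is the fast-phase iteration: justifying the contraction $R_\tau \ge \sqrt{R_{\tau-1}}$ (which requires $R_{\tau-1}\le 1$ so that the square root dominates), carefully handling the boundary between the two regimes with at most one bridging stage, and absorbing the additive $+1$ and ceiling terms in the threshold without disturbing the iterated-logarithm rate. A secondary subtlety is ensuring the Cauchy--Schwarz aggregation still goes through even though some queries may have $N_q = 0$ (contributing only the trivial initial stage $\tau_q = 1$). Once these recursions and the phase boundary are pinned down, everything else is routine bookkeeping, and the expectation bound follows because the entire argument is pathwise.
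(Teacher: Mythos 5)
Your proof is correct and follows essentially the same route as the paper's: both rest on the doubly-exponential growth of the cumulative per-stage counts (which the paper imports as Lemma~2 of Dong et al.\ and you re-derive from the recursion $S_\tau \ge S_{\tau-1} + \sqrt{T S_{\tau-1}/m}$), reaching the threshold $T/m$ in $O(\log\log T)$ stages, followed by a pathwise budget argument (total observations $\le T$) that caps the remaining stages at $O(m)$. The only cosmetic differences are your Cauchy--Schwarz accounting of the slow phase versus the paper's direct count of stages containing at least $T/(em)$ observations, and the constant in $\sqrt{f_\tau}\ge\sqrt{f_{\tau-1}}+1/2$, which must be taken slightly smaller (e.g.\ $1/3$) since $(\sqrt{f}+1/2)^2 = f+\sqrt{f}+1/4$ exceeds the guaranteed increment.
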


\begin{proof}[Proof of \cref{thm:online}]
For the ease of exposition, we define auxiliary variables $N'_{c,t}(q), \hat{\bc}_t', \ubar{\bc}_t'$ and $N'_{t,p}, \hat{\bp}_t'$ as follows: If the selected cache at time $t$ is just switched in \cref{alg:CLCB-LLM-switch} (i.e., \texttt{Switch} == True), then $N'_{c,t}(q)=N_{c,t}(q), \hat{\bc}_t'=\hat{\bc}_t, \ubar{\bc}_t'=\ubar{\bc}_t$ and $N'_{t,p}=t,  \hat{\bp}_t'=\hat{\bp}_t$; Otherwise, if the previous cache is maintained (i.e.,  \texttt{Switch} == False), then $N'_{c,t}(q)=N'_{c,t-1}(q), \hat{\bc}_t'=\hat{\bc}_{t-1}'$ and $N'_{t,p}=N'_{t-1,p}, \hat{\bp}_t'=\hat{\bp}_{t-1}'$.

Under the high-probability events $\cE_{\text{arvl}}$ and $\cE_{\text{cost}}$, we decompose the regret (without switching cost) into three terms:
% \begin{align}
%     &\ell(\cM_t;\bc, \bp, d) - \alpha \ell \left(\cM^*;\bc, \bp, d \right) \\
%     & \overset{(a)}{=} \underbrace{\ell(\cM_t; \bc, \bp, d) - \ell \left(\cM_t; \ubar{\bc}'_t, \hat{\bp}'_t, d \right)}_{\text{estimation gap}} + \underbrace{  \ell \left(\cM_t; \ubar{\bc}'_t, \hat{\bp}'_t, d \right) - \alpha  \ell \left(\cM^*;\ubar{\bc}'_t, \hat{\bp}'_t, d \right)}_{\text{greedy gap}} + \underbrace{ \alpha \ell \left(\cM^*;\ubar{\bc}'_t, \hat{\bp}'_t, d \right) - \alpha \ell  \left(\cM^*;\bc, \bp, d \right)}_{\text{optimistic gap}}\\
%     &\overset{(b)}{\le} \ell(\cM_t; \bc, \bp, d) - \ell \left(\cM_t; \ubar{\bc}'_t, \hat{\bp}'_t, d \right) + \alpha \ell \left(\cM^*;\bc, \hat{\bp}'_t, d \right) - \alpha \ell  \left(\cM^*;\bc, \bp, d \right)\\
%     &\overset{(e)}{\le}  \sum_{q\in \cQ:\, \ubar{c}'_t(q) \le d(q, \cM)} p(q) (c(q) - \ubar{c}'_t(q)) +  2\norm{\hat{\bp}'_t - \bp}_1\\
%     &\overset{(f)}{\le}  \sum_{q\in \cQ:\, \ubar{c}'_t(q) \le d(q, \cM)} 2p(q)\sqrt{ \frac {\log (\frac{2mT^2}{\delta}) } {2 N'_{c,t}(q)} }  + 2\sqrt{\frac{2m\log(\frac{2T}{\delta})}{N'_{t,p}}}, \label{apdx_eq:LLM_reduce_to_prev}
% \end{align}
$\ell(\cM_t;\bc, \bp) - \alpha \ell(\cM^*;\bc, \bp) 
= \underbrace{\ell(\cM_t; \bc, \bp) - \ell(\cM_t; \ubar{\bc}'_t, \hat{\bp}'_t)}_{\text{estimation gap}} 
+ \underbrace{\ell(\cM_t; \ubar{\bc}'_t, \hat{\bp}'_t) - \alpha \ell(\cM^*;\ubar{\bc}'_t, \hat{\bp}'_t)}_{\text{greedy gap}} 
\\+ \underbrace{\alpha \ell(\cM^*;\ubar{\bc}'_t, \hat{\bp}'_t) - \alpha \ell(\cM^*;\bc, \bp)}_{\text{optimistic gap}} 
\le \ell(\cM_t; \bc, \bp) - \ell(\cM_t; \ubar{\bc}'_t, \hat{\bp}'_t) + \alpha \ell(\cM^*;\bc, \hat{\bp}'_t) - \alpha \ell(\cM^*;\bc, \bp) 
\le \sum_{q \in \cQ:\, \ubar{c}'_t(q) \le d(q, \cM)} p(q) (c(q) - \ubar{c}'_t(q)) + 2\norm{\hat{\bp}'_t - \bp}_1 
\le \sum_{q \in \cQ:\, \ubar{c}'_t(q) \le d(q, \cM)} 2p(q) \sqrt{\frac{\log(\frac{2mT^2}{\delta})}{2 N'_{c,t}(q)}} + 2\sqrt{\frac{2m \log(\frac{2T}{\delta})}{N'_{t,p}}}.$
where the first equality is due to adding and subtracting $\ell \left(\cM_t; \ubar{\bc}'_t, \hat{\bp}'_t, d \right)$ and $ \alpha  \ell \left(\cM^*;\ubar{\bc}'_t, \hat{\bp}'_t, d \right)$, the first inequality is due to the greedy gap $\le 0$ by the approximation ratio guarantee of the $\texttt{Reverse Greedy}$, $\ubar{\bc}_t' \le \bc$ by \cref{lem:offline_concen}, and $\ell(\cM;\bp,\bc)$ is monotone regarding $\bc$, the second inequality is due to applying \cref{lem:online_smooth} and \cref{lem:offline_smooth}, the last inequality is due to \cref{lem:online_concen}.

Now let us denote $\gamma\defeq \log(2mT^2/\delta)$ for the first term, we have:
% \begin{align}
%     &\E\left[ \sum_{t=1}^T \sum_{q\in \cQ:\, \ubar{c}'_t(q) \le d(q, \cM_t)} p(q)\sqrt{ \frac {2\log (\frac{2mT^2}{\delta}) } { N'_{c,t}(q)} } \right]\\
%     &= \E\left[ \sum_{t=1}^T \sum_{q = \tilde{q}_t}\sqrt{ \frac {2\log (\frac{2mT^2}{\delta}) } { N'_{c,t}(q)} } \right]\\
%     &= \E\left[ \sum_{i \in \cQ} \sum_{\tau \in \tau_{q}(T)} \sum_{t \in \cT(q, \tau)}\sqrt{ \frac {2\log (\frac{2mT^2}{\delta}) } { N'_{c,t}(q)} } \right]\\
%     &\le \E\left[ \sum_{i \in \cQ} \sum_{\tau \in \tau_{q}(T)} \sum_{t \in \cT(q, \tau)}\sqrt{ \frac {2\log (\frac{2mT^2}{\delta}) } { \sum_{\tau'=1}^{\tau-1}|\cT(q,\tau')|} } \right]\\
%     &\le \E\left[ \sum_{i \in \cQ} \sum_{\tau \in \tau_{q}(T)} |\cT(q, \tau)|\sqrt{ \frac {2\log (\frac{2mT^2}{\delta}) } { \sum_{\tau'=1}^{\tau-1}|\cT(q,\tau')|} } \right]\\
%     &\le \E\left[ \sum_{i \in \cQ} \sum_{\tau \in \tau_{q}(T)} 2\sqrt{\frac{T \cdot \sum_{\tau'=1}^{\tau_q-1} |\cT(q,\tau')|}{m}}\sqrt{ \frac {2\log (\frac{2mT^2}{\delta}) } { \sum_{\tau'=1}^{\tau-1}|\cT(q,\tau')|} } \right]\\
%     &=\E\left[\sum_{q \in \cQ}\tau_q(T)\right] 2\sqrt{2T\log (\frac{mT}{\delta}) / m}\\
%     &\le O(\sqrt{mT \log (\frac{mT}{\delta}) } \log \log T)
% \end{align}
$\E[ \sum_{t=1}^T \sum_{q\in \cQ:\, \ubar{c}'_t(q) \le d(q, \cM_t)} p(q)\sqrt{ 2\gamma / N'_{c,t}(q) } ] \\
= \E[ \sum_{t=1}^T \sum_{q = \tilde{q}_t} \sqrt{ 2\gamma / N'_{c,t}(q) } ] \\
= \E[ \sum_{i \in \cQ} \sum_{\tau \in \tau_q(T)} \sum_{t \in \cT(q, \tau)} \sqrt{ 2\gamma / N'_{c,t}(q) } ] \\
\le \E[ \sum_{i \in \cQ} \sum_{\tau \in \tau_q(T)} \sum_{t \in \cT(q, \tau)} \sqrt{ 2\gamma / \sum_{\tau'=1}^{\tau-1}|\cT(q,\tau')| } ] \\
\le \E[ \sum_{i \in \cQ} \sum_{\tau \in \tau_q(T)} |\cT(q, \tau)| \sqrt{ 2\gamma / \sum_{\tau'=1}^{\tau-1}|\cT(q,\tau')| } ] \\
\le \E[ \sum_{i \in \cQ} \sum_{\tau \in \tau_q(T)} 2\sqrt{T \cdot \sum_{\tau'=1}^{\tau_q-1} |\cT(q,\tau')| / m}\\\cdot \sqrt{ 2\gamma / \sum_{\tau'=1}^{\tau-1}|\cT(q,\tau')| } ] 
= \E[\sum_{q \in \cQ} \tau_q(T)] \cdot 2\sqrt{ 2T\gamma / m } \\
\le O(\sqrt{mT \gamma} \cdot \log \log T)$, 
where the first equality is due to the fact that any query $q$ is observed if and only if $\ubar{c}_t'(q) \le d(q, \cM_t)$ and by the triggering probability equivalence trick of \cite{liu2023contextual} in expectation $\sum_{\ubar{c}_t'(q) \le d(q, \cM_t)}p(q) \cdot x_q$ equals to $\E[\sum_{q = \tilde{q}_t}x_q]$ where $\tilde{q}_t=q_t$ if $q_t$ is the observed or $\tilde{q}_t=\emptyset$ otherwise, the second equality is due by exchanging the order of summation, the first inequlaity is due to $N'_{c,t}(q) \ge \sum_{\tau'=1}^{\tau-1}|\cT(q, \tau')|$, the last inequality is due to \cref{line:switch_q_start} of \cref{alg:CLCB-LLM-switch}.

For the second term, we can similarly derive that
$\E\left[ \sum_{t=1}^T 2\sqrt{2m \log(2T / \delta) / N'_{t,p}} \right] \le O\left( \sqrt{m T\log(2T / \delta)} \cdot \log \log T \right)$.
Finally, we set $\delta = 1/T$ and note that the number of cache switches is bounded by $O((m+1)\log \log T)$ by \cref{lem:online_n_switch}, yielding a lower-order regret term
$\E\left[ \sum_{q \in \cM_t \setminus \cM_{t-1}} c(q) \right] \le O(m^2 \log \log T)$ in \cref{eq:reg_def}, completing the proof.
\end{proof}
% \begin{align}
%     &\E\left[ \sum_{t=1}^T 2\sqrt{\frac{2m\log(\frac{2T}{\delta})}{N'_{t,p}}}  \right]\\
%     &\le \E\left[ \sum_{\tau \in \tau_p(T)} \sum_{t \in \cT(p, \tau)} \sqrt{\frac{2m\log(\frac{2T}{\delta})}{\sum_{\tau'=1}^{\tau-1} |\cT(p, \tau')| } }\right]\\
%     &\le 2\E\left[ \sum_{\tau \in \tau_p(T)} \sqrt{T\cdot{\sum_{\tau'=1}^{\tau-1} |\cT(p, \tau')| } }\sqrt{\frac{2m\log(\frac{1}{\delta})}{\sum_{\tau'=1}^{\tau-1} |\cT(p, \tau')| } }\right]\\
%     &\le 4 \E[\tau_p(T)] \sqrt{mT \log(\frac{2T}{\delta})}\le \sqrt{m \log(\frac{2T}{\delta})} \log \log T
% \end{align}
\section{Experiments}
In this section, we present the experimental results for all three of our settings, i.e., Algorithms \ref{alg:rg}, \ref{alg:CUCB-SC}, and \ref{alg:CLCB-LLM-switch}. 

\textbf{Experimental Setup.} 
% \xiangxiang{%We should elaborate more on the experimental setup section. At the moment, this part could benefit from greater specificity.
%For instance, what is the scale of the generated queries, and what method is used to generate them? How is the linear cost function c(q) defined? The level of Gaussian noise? 
% Additionally, in the synthetic caching environment — which plays a critical role — how is the notion of cache or vector database actually implemented or represented? This is especially relevant and important for (i) the oracle setting, (ii) the offline learning setting, and (iii) the online adaptive setting.}
We test these algorithms' performance in a synthetic caching setting where we generate the \textbf{queries} ($m=20$ is used for all experiments except for the ablation study on $m$, and ChatGPT was used to generate the queries), and there is a differing variety of similarity between queries in terms of their content. Some queries are quite similar to each other, while others are more distantly related. For example "Rome attractions", 
"top 10 tourist attractions and hidden gems in Rome", 
% "tennis scores", "who won the latest tennis grand slam final", 
"AI papers", "recent breakthroughs in AI and machine learning research" are some of the queries in our dataset. Queries are sampled using the uniform distribution. To construct the \textbf{expected serving cost} $c(q)$ associated with each query $q$, we tokenize these queries and define the cost as the number of tokens associated with each query that is min-max normalized after adding $\epsilon_t(q)$ zero-mean Gaussian noise with a standard deviation of 0.05. 
%\carlee{Is this linear with the number of tokens? do we use the same marginal cost per token for each query?Yes}. 
To form the \textbf{embeddings} $e(q)$ associated with each query, we use a sentence transformer model \cite{reimers-2019-sentence-bert} that maps queries to a 384-dimensional vector representation. The similarity metric $d$ is the Euclidean distance between query embeddings. Similarly, the similarity metric is min-max normalized to ensure that it is in the range of 0-1. 
% \siwei{In this case, can we simply choose $\gamma = 1$?} 
The shaded regions in Figure \ref{fig:combined} indicate the standard deviations as 10 runs were made with each run having a different random seed, which affects the query sampling for the incoming query stream and the noise associated with the cost of the queries.

\begin{figure*}[!t]
  \centering
  \includegraphics[width=0.98\textwidth]{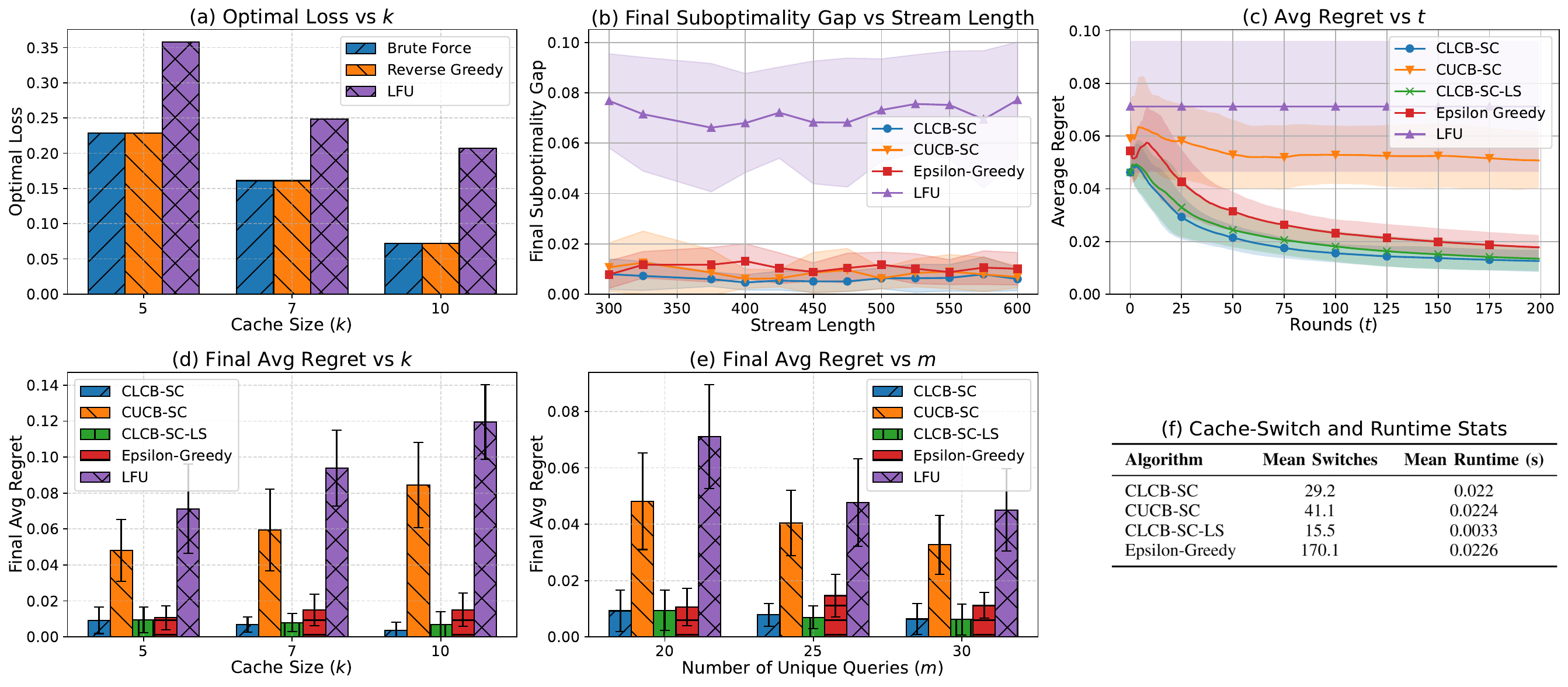}
  \caption{(a) Variation of the loss 
  % \carlee{why is this optimal?} 
  with the cache size $k$ for a fixed set of queries with $m=20$ (b) Variation of the final suboptimality gap with the stream length in the offline setting, we test the performance of Algorithm \ref{alg:CUCB-SC} and its offline variant CLCB-SC (c) Variation of the average regret with the number of rounds $t$ for the online setting. CUCB-SC and CLCB-SC are the online variants of Algorithm \ref{alg:CUCB-SC} and its LCB variant (d) Ablation study on the final average regret's change with the cache size $k$ in the online setting (e) Ablation study on the final average regret's change with the number of distinct queries $m$ in the online setting (f) Table showing the mean number of switches and runtime of the algorithms in the online setting}
\label{fig:combined}
\end{figure*}

\textbf{Optimality of Reverse-Greedy with Known Parameters.} First we explore how Algorithm \ref{alg:rg} (Reverse Greedy) approximates the optimal loss $\ell(\cM^*; \bp, \bc, d)$ which we refer to here as the Brute Force optimal loss. We also plot the loss 
% \carlee{should this just be the ``loss'' as it's not optimal?} 
for the Least Frequently Used (LFU) caching strategy, which prioritizes caching the most frequent queries \cite{zhu2023optimal}.
%\xiangxiang{consider to add', which prioritizes caching the most frequent queries.'?}
 We show the variation of the optimal loss with the cache size $k$ in Figure \ref{fig:combined}(a). It can be seen that Reverse Greedy perfectly approximates the Brute Force optimal loss in this setting.
% \siwei{Which one is more representative here, Least Expected Cost or LFU}?
%\xiangxiang{why only this first figure does not have error bar? Because it is the optimal loss for a fixed $m$ value}

% \begin{figure}[h] 
% \centering
% \includegraphics[width=\linewidth,height=5cm,keepaspectratio]{}
% \caption{Variation of the optimal loss with the cache size $k$ }
% \label{fig:brute_rg_lfu}
% \end{figure}

\textbf{Offline Setting.} Now we test how Algorithm \ref{alg:CUCB-SC} performs in the offline setting as seen in Figure \ref{fig:combined}(b). For these experiments we keep $m=20$ and $k=5$ and vary the length of the incoming query stream. We also consider the LCB variant of Algorithm \ref{alg:CUCB-SC} and Epsilon Greedy and LFU baselines. Epsilon Greedy, a standard benchmark for bandit algorithms, uses Reverse Greedy to construct the cache but uses the empirical cost $\hat{c}$ and randomly removes a query with probability $\epsilon_g$ (which we set to be 0.2 for all experiments).
%\xiangxiang{why $\epsilon_g$, what is $g$? It stands for epsilon greedy, didn't use $\epsilon$ since it is already use for noise}
CUCB-SC and CLCB-SC perform comparably in terms of the final suboptimality gap while Epsilon Greedy does slightly worse. This aligns with expectations: CUCB is theoretically optimal, while CLCB, though it may optimistically estimate costs, performs well in typical (non-worst-case) scenarios.
% \carlee{explain why it is not too surprising CUCB-SC and CLCB-SC both do okay--CUCB is theoretically optimal as CLCB may underestimate costs with few observations, but in practice this is okay}
% \xiangxiang{It may be beneficial to include more specific performance figures (e.g., X\% improvement) to highlight the impact of the method.
% Furthermore, in the Final Suboptimality Gap vs. Stream Length plot (this figure not ref in the text now), the difference between the proposed method and Epsilon-Greedy seems minor, whereas LFU exhibits a much larger gap. To improve visual clarity, consider truncating the y-axis and presenting two separate ranges (e.g., 0–0.03 and 0.06–0.9) as distinct subplots that are combined into a single figure. I tried the truncation and combining but it messes up the subplot for the other plots.}

\textbf{Online Setting.} Next, we conduct an experiment for the online caching setting where we test the performance of CLCB-SC-LS (Algorithm \ref{alg:CLCB-LLM-switch}) and the online variant of CUCB-SC (Algorithm \ref{alg:CUCB-SC}) and its LCB variant (CLCB-SC, without adding the switching cost in regret) by recording the variation of the average regret with the number of rounds $t$ as shown in Figure \ref{fig:combined}(c). 
Finally, we consider LFU as a static baseline in this setting as we let it consider the full query stream before deciding on the cache.
%\xiangxiang{what do you mean 'static baseline'? cache does not change? Yes LFU uses the full query stream, so it can be considered as an offline variant. The online version was significantly worse.}
It can be seen that CLCB-SC-LS and CLCB-SC have sublinear regret with the average regret approaching zero as round $t$ increases and achieves at least 11.75\% improvement compared with the most competitive Epsilon-Greedy baseline.
% \siwei{Do you count the switching cost? also, it seems that in the online setting CLCB-SC is better than CLCB-SC-LS, and in the offline setting CLCB-SC is better than CUCB-SC. This seems mean CLCB-SC is a better choice? I didn't consider the switching cost.}

% \begin{figure}[h] 
% \centering
% \includegraphics[width=\linewidth,height=5cm,keepaspectratio]{}
% \caption{Variation of the average regret with $t$ for  $m=20, k=5$ }
% \label{fig:avg_reg_online}
% \end{figure}

%\xiangxiang{what is 'final regret', maybe just say 'regret' is enough, and round $t$ should be introduced for bar figures. It is the final avg regret after running the online algorithms for $t$ rounds}
We now present ablation studies regarding the variation of the final average regret with $k$ and $m$. Figure \ref{fig:combined}(d) shows that CLCB-SC and CLCB-SC-LS get lower final regret as the cache size increases while CUCB-SC and LFU get higher final regret, showing our enlarged improvement compared with the strongest baseline (from 11.75\% to 54.04\%). 
% This is expected as $m$ is kept constant for this study
% \xiangxiang{add 'consistent with Theorm X'?}
% .
Figure \ref{fig:combined}(e) shows the variation of the final average regret with $m$, notably CLCB-SC-LS and CLCB-SC achieve lower final average regret than the other baselines for as $m$ values, validating our tight regret bound in \cref{thm:online}.
% \carlee{tie this to the regret bounds}
%\siwei{What does the last table mean? Its the number of times the cache changes in the online setting} 

Finally, Figure~\ref{fig:combined}(f) presents the total number of cache switches and running time in the online setting. As expected, the low-switching algorithm CLCB-SC-LS achieves the fewest switches and fastest runtime, reducing switches by up to 90.91\% and running time by up to 85.40\% compared to the baselines. Despite this constraint, its performance remains strong—Figure~\ref{fig:combined}(c) shows that its regret closely tracks that of CLCB-SC, which incurs nearly twice as many switches.

% \begin{table}[h]
% \centering
% \begin{tabular}{lcc}
% \toprule
% \textbf{Algorithm} & \textbf{Mean Switches} & \textbf{Mean Runtime (s)} \\
% \midrule
% CLCB-SC         & 29.2   & 0.022 \\
% CUCB-SC         & 41.1   & 0.0224 \\
% CLCB-SC-LS      & 15.5   & 0.0033 \\
% Epsilon-Greedy  & 170.1  & 0.0226 \\
% \bottomrule
% \end{tabular}
% %\caption{Mean and standard deviation of the total number of cache changes.}
% \label{tab:cache_changes}
% \end{table}

% \begin{figure}[h] 
% \centering
% \includegraphics[width=\linewidth,height=5cm,keepaspectratio]{}
% \caption{Variation of the final regret with $k$ for  $m=20, t=200$ }
% \label{fig:reg_k_online}
% \end{figure}

% \begin{figure}[h] 
% \centering
% \includegraphics[width=\linewidth,height=5cm,keepaspectratio]{}
% \caption{Variation of the final regret with $m$ for  $k=5, t=200$ }
% \label{fig:reg_m_online}
% \end{figure}
% Finally, we present the total number of times the algorithms change their cache. As expected, the low-switching algorithm CLCB-SC-LS gets the lowest number of cache changes both in terms of mean and standard deviation.
% \begin{table}[h]
% \centering
% \begin{tabular}{lcc}
% \toprule
% \textbf{Algorithm} & \textbf{Mean Changes} & \textbf{Std Dev} \\
% \midrule
% CLCB-SC    & 29.200  & 5.554 \\
% CUCB-SC     & 41.100  & 7.549 \\
% CLCB-SC-LS     & 15.500  & 2.121 \\
% Epsilon Greedy  & 170.100 & 8.048 \\
% \bottomrule
% \end{tabular}
% \caption{Mean and standard deviation of the total number of cache changes for methods.}
% \label{tab:cache_changes}
% \end{table}

\section{Conclusion and Future Directions}
We present a principled, learning-based semantic caching framework for low-cost LLM serving. Our study spans three settings---oracle, offline, and online---offering a systematic study under different information regimes. The proposed algorithms achieve state-of-the-art performance guarantees and show superior empirical results. A promising future direction is to explore hybrid approaches that integrate offline training with online adaptation for more robust and efficient caching.
\section{Appendix}
Here we provide the missing proofs in the main body.

\begin{proof}[Proof of \cref{lem:NP}]
    For \cref{eq:bipartite_loss}, the optimization problem is equivalent to 
    $\cM^* = \argmax_{\cM \subseteq \cU, |\cM| \le k} \sum_{v \in N(\cM)} p(v)c(v),$
which is essentially the maximum vertex cover problem on bipartite graphs and is NP-hard as proved in \cite{10.1016/j.dam.2013.05.015}.
\end{proof}

\begin{proof}[Proof of \cref{lem:supermodular}]
    For each query $q \in \cQ$, define the per-query loss as $\ell_q(\cM) = \min\{ d(q, \cM),\, c(q) \} = \min_{u \in \cM \cup \{q_0\}} d(q, u)$, where $q_0$ is a virtual element with $d(q, q_0) = c(q)$. 
As submodularity and monotonicity are preserved under non-negative linear combinations.
It suffices to show that $\ell_q(\cM)$ is a \emph{monotone decreasing submodular function}. Let $A \subseteq B \subseteq \cQ$ and $u \notin B$.
For \textbf{Monotonicity}, since $\ell_q(\cM)$ takes the minimum over a larger set when moving from $A$ to $B$, it holds that $\ell_q(B) \le \ell_q(A)$. For \textbf{Submodularity}, we need to show the diminishing returns property:
$\ell_q(A \cup \{u\}) - \ell_q(A) \le \ell_q(B \cup \{u\}) - \ell_q(B).$
We exhaustively consider three cases based on the relative values of $\ell_q(A)$, $\ell_q(B)$, and $d(q,u)$; in all, the inequality holds.
\end{proof}

% \textbf{Case 1:} $\ell_q(B) \le \ell_q(A) \le d(q, u)$. Then $u$ does not improve either set, so $\ell_q(A \cup \{u\}) = \ell_q(A)$ and $\ell_q(B \cup \{u\}) = \ell_q(B)$.

% \textbf{Case 2:} $\ell_q(B) \le d(q, u) \le \ell_q(A)$. Then $\ell_q(A \cup \{u\}) = d(q,u) < \ell_q(A)$ and $\ell_q(B \cup \{u\}) = \ell_q(B)$, so $\ell_q(A \cup \{u\}) - \ell_q(A) \le 0 = \ell_q(B \cup \{u\}) - \ell_q(B)$.

% \textbf{Case 3:} $d(q, u) \le \ell_q(B) \le \ell_q(A)$. Then $\ell_q(A \cup \{u\}) = d(q, u)$ and $\ell_q(B \cup \{u\}) = d(q, u)$, so $\ell_q(A \cup \{u\}) - \ell_q(A) \le \ell_q(B \cup \{u\}) - \ell_q(B)$.

\begin{proof}[Proof of \cref{thm:approx}]
    The approximation guarantee follows from invoking Corollary 4 of \cite{il2001approximation}, which provides bounds for minimizing a non-increasing supermodular function under a cardinality constraint using the reverse greedy algorithm.
\end{proof}

\begin{proof}[Proof of \cref{lem:offline_concen}]
    For $\cE_{arvl}$, $\Pr[\cE_{arvl}] \ge 1-\delta $ holds because of Weissman et al. \cite{weissman2003inequalities}.
For $\cE_{cost}$ and $\cE_{arvl}$, we follow Lemma 5 and Lemma 6 of Liu et al. \cite{liu2025offline}, respectively.
\end{proof}

\begin{proof}[Proof of \cref{lem:offline_smooth}]
For any cache $\cM$, any $\bp, \bp', \bc, \bc' \in [0,1]^m$, we can rearrange $\bar{p}=(\max\{p(q), p'(q)\})_{q \in \cQ}$, $\ubar{p}=(\min\{p(q), p'(q)\})_{q \in \cQ}$, $\bar{c}=(\max\{c(q), c'(q)\})_{q \in \cQ}$, $\ubar{c}=(\min\{c(q), c'(q)\})_{q \in \cQ}$. Let us fix any $d$ and rewrite $\ell(\cM; \bp, \bc) \defeq \ell(\cM; \bp, \bc, d)$ for simplicity, then we have: $|\ell(\cM; \bp', \bc') - \ell(\cM; \bp, \bc')| \le \ell(\cM; \bar{\bp}, \bc') - \ell(\cM; \ubar{\bp}, \bc') = \sum_{q \in \cQ} (\bar{p}(q) - \ubar{p}(q)) \min\{c'(q), d(q, \cM)\} \le \sum_{q \in \cQ} (\bar{p}(q) - \ubar{p}(q)) = \sum_{q \in \cQ} |p(q) - p'(q)|$.
% \begin{align}
%     &|\ell(\cM; \bp', \bc') - \ell(\cM; \bp, \bc')|\le \ell(\cM; \bar{\bp}, \bc') - \ell(\cM; \ubar{\bp}, \bc')\\ 
%     &= \sum_{q \in \cQ} (\bar{p}(q) - \ubar{p}(q)) \min\{c'(q), d(q, \cM)\}\\
%     &\le \sum_{q \in \cQ} (\bar{p}(q) - \ubar{p}(q)) = \sum_{q \in \cQ} \abs{p(q) - p'(q)}\label{apdx_eq:p_smooth}
% \end{align}
We can also prove:
$|\ell(\cM; \bp, \bc') - \ell(\cM; \bp, \bc)| \le \ell(\cM; \bp, \bar{\bc}) - \ell(\cM; \bp, \ubar{\bc}) = \sum_{q \in A} p(q)(\bar{c}(q)-\ubar{c}(q)) + \sum_{q \in B} p(q)(d(q,\cM)-\ubar{c}(q)) + \sum_{q \in C} p(q)(d(q,\cM)-d(q,\cM)) \le \sum_{q \in A \cup B} p(q)(\bar{c}(q)-\ubar{c}(q)) = \sum_{\ubar{c}(q) \le d(q,\cM)} p(q)(\bar{c}(q)-\ubar{c}(q)) \le \sum_{q \in \cQ} p(q)(\bar{c}(q)-\ubar{c}(q)) = \sum_{q \in \cQ} p(q)|c(q) - c'(q)|$,
% \begin{align}
%     &|\ell(\cM; \bp, \bc') - \ell(\cM; \bp, \bc)|\\
%     &\le \ell(\cM; \bp, \bar{\bc}) - \ell(\cM; \bp, \ubar{\bc})\label{apdx_eq:offline_smooth_start}\\
%     &=\sum_{q \in A} p(q)(\bar{c}(q)-\ubar{c}(q)) + \sum_{q \in B} p(q) (d(q,\cM)-\ubar{c}(q)) \\
%     &+ \sum_{q \in C} p(q) (d(q,\cM)-d(q,\cM))\\
%     &\le \sum_{A \cup B} p(q) (\bar{c}(q)-\ubar{c}(q)) = \sum_{\ubar{c}(q) \le d(q, \cM)} p(q) (\bar{c}(q)-\ubar{c}(q)) \label{apdx_eq:offline_smooth_end}\\
%     &\le \sum_{q \in \cQ} p(q)(\bar{c}(q)-\ubar{c}(q)) = \sum_{q \in \cQ} p(q)|c(q) - c'(q)|\label{apdx_eq:c_smooth}
% \end{align}
where $A = \{q \in \cQ: d(q, \cM)\ge \bar{c}(q) > \ubar{c}(q) \}$, $B = \{q \in \cQ: \bar{c}(q) \ge d(q, \cM)>  \ubar{c}(q) \}$, and $C= \{q \in \cQ: \bar{c}(q) \ge  \ubar{c}(q) > d(q, \cM) \}$.
Finally, using the fact that $|\ell(\cM; \bp', \bc') - \ell(\cM; \bp, \bc)| \le |\ell(\cM; \bp', \bc') - \ell(\cM; \bp, \bc')| + |\ell(\cM; \bp, \bc') - \ell(\cM; \bp, \bc)|$ concludes the \cref{lem:offline_smooth}.
\end{proof}

\begin{proof}[Proof of \cref{lem:online_concen}]
    We follow the same proof of \cref{lem:offline_concen}, but use the union bound over all $t \in [T]$ and $N_{c,t}(q)\in [T]$. 
\end{proof}

\begin{proof}[Proof of \cref{lem:online_smooth}]
    We follow the proof of \cref{lem:offline_smooth} with $\bar{\bc}=\bc, \ubar{\bc}=\bc'$.
\end{proof}

\begin{proof}[Proof of \cref{lem:online_n_switch}]
% We first introduce the following lemma extracted from Lemma 2 in Dong et al. \cite{dong2020multinomial}.
% \begin{lemma}\label{lem:fact_1}
%     For $M \geq 0$ and a sequence $a_0, a_1, a_2, \ldots$ such that $a_i \geq 1+\sqrt{M a_{i-1}}$ for all $i \geq 1$, we have that $a_\tau \geq$ $M^{1-2^{-\tau+1}}$ for all $\tau \geq 1$.
% \end{lemma}
Let us denote $s_0 = \log \log (T/m)$.
By the switch condition in \cref{line:switch_q_start} of \cref{alg:switch} and Lemma 2 in Dong et al. \cite{dong2020multinomial}, we have:
$|\cT(q, \tau_q)| \ge (T/m)^{1-2^{-\tau_q + 1}} \ge (T/m)^{1-2^{\log \log (T/m)}} = \frac{T}{e \cdot m}$
for any $\tau_q \ge s_0 + 1$ and $\tau_q$ is not the last stage.
% By the switch condition in \cref{line:switch_q_start} of \cref{alg:switch}, we have the number of observations $|\cT(q, \tau_q)|$ at $\tau_q$-th stage satisfies: $|\cT(q,\tau_q)| \ge 1+\sqrt{\frac{T \cdot |\cT(q,\tau_{q}-1)|}{m}}$.
% Therefore, by \cref{lem:fact_1}, we have:
% $|\cT(q, \tau_q)| \ge (T/m)^{1-2^{-\tau_q + 1}} \ge (T/m)^{1-2^{\log \log (T/m)}} = \frac{T}{e \cdot m}$
% for any $\tau_q \ge s_0 + 1$ and $\tau_q$ is not the last stage.
Since the total number of observations is at most $T$ (one query each round), there are at most $m(e+1)$ (including their last stages for $q \in \cQ$) query-stage pairs $(q, \tau)$ s.t. for $q \in \cQ$, $\tau_q \in [\tau_q(T)]$ s.t. $\tau_q \ge s_0+1$ (which implies $\tau_q(T)\ge s_0+1$). Thus, we have $\sum_{q \in \cQ} \I\{\tau_q(T) \ge s_0 + 1\}\tau_q(T) = \sum_{q: \tau_q(T) \ge s_0+1}\sum_{\tau_q \in [\tau_q(T)]} \I\{\tau_q < s_0+1\} + \I\{\tau_q \ge s_0+1\}   < m (s_0+1) +  m(e+1)$. Therefore, we have $\E [\sum_{q \in \cQ} \tau_q(T)] = \E[\sum_{q \in \cQ} \I\{\tau_q(T) < s_0 + 1\}\tau_q(T) + \I\{\tau_q(T) \ge s_0 + 1\}\tau_q(T)] \le m (s_0+1) + [m(s_0+1) + m(e+1)]= m(2s_0 + e + 3)=O(m \log \log (T))$. Similarly, we can follow the same proof to show that $\E[\tau_p]\le O(\log \log (T))$ by treating the arrival probability as a single arm with $m=1$ that has total $T$ observations.
\end{proof}

% \clearpage
\bibliographystyle{IEEEtran}
\bibliography{main}

\end{document}